\documentclass[10pt,twocolumn,letterpaper]{article}

\usepackage[pagenumbers]{cvpr} % To force page numbers, e.g. for an arXiv version

\usepackage{pifont}

\def\v0{{\bf 0}}

\def\bsp{{\boldsymbol p}}
\def\bsq{{\boldsymbol q}}

\def\bst{{\boldsymbol t}}

\def\bsR{{\boldsymbol R}}

\def\cC{{\mathcal C}}

\def\cP{{\mathcal P}}
\def\cQ{{\mathcal Q}}

\DeclareMathOperator*{\argmin}{argmin}

\definecolor{sourceclr}{HTML}{91B1B1}
\definecolor{targetclr}{HTML}{C29211}
\definecolor{rreclr}{HTML}{1F77B4}
\definecolor{rteclr}{HTML}{D62728}

\newtheorem{proposition}{Proposition}[section]
\usepackage{newproof}

\usepackage{graphicx}
\usepackage{epstopdf}
\graphicspath{{./}{Figures/}}
\DeclareGraphicsExtensions{.pdf,.png,.jpg}

\usepackage{booktabs}
\usepackage{multirow}
\usepackage{makecell}
\usepackage{arydshln}

\definecolor{cvprblue}{rgb}{0.21,0.49,0.74}
\usepackage[pagebackref,breaklinks,colorlinks,allcolors=cvprblue]{hyperref}

\title{\textit{C-GenReg}: Training-Free 3D Point Cloud Registration by Multi-View-Consistent Geometry-to-Image Generation with Probabilistic Modalities Fusion}
\author{
Yuval Haitman\thanks{Corresponding author: \texttt{haitman@post.bgu.ac.il}}
\quad
Amit Efraim
\quad
Joseph M. Francos\\
Ben-Gurion University, Beer-Sheva, Israel
}
\usepackage[subtle]{savetrees}
\usepackage{enumitem}

\fussy                 % opposite of \sloppy
\begin{document}
\maketitle
\begin{abstract}
We introduce C-GenReg, a training-free framework for 3D point cloud registration that leverages the complementary strengths of world-scale generative priors and registration-oriented Vision Foundation Models (VFMs). Current learning-based 3D  point cloud registration methods struggle to generalize across sensing modalities, sampling differences, and environments. Hence, C-GenReg augments the geometric point cloud registration branch by transferring the matching problem into an auxiliary image domain, where VFMs excel, using a World Foundation Model to synthesize multi-view-consistent RGB representations from the input geometry. This generative transfer preserves spatial coherence across source and target views without any fine-tuning. From these generated views, a VFM pretrained for finding dense correspondences extracts matches. The resulting pixel correspondences are lifted back to 3D via the original depth maps. 
To further enhance robustness, we introduce a “Match-then-Fuse” probabilistic cold-fusion scheme that combines two independent correspondence posteriors, that of the generated-RGB branch with that of the raw geometric branch. This principled fusion preserves each modality's inductive bias and provides calibrated confidence without any additional learning. C-GenReg is zero-shot and plug-and-play: all modules are pretrained and operate without fine-tuning. Extensive experiments on indoor (3DMatch, ScanNet) and outdoor (Waymo) benchmarks demonstrate strong zero-shot performance and superior cross-domain generalization. For the first time, we demonstrate a generative registration framework that operates successfully on real outdoor LiDAR data, where imagery is unavailable. Code is available at: \noindent\url{https://github.com/yuvalH9/CGenReg}
\end{abstract}    
\vspace{-17pt}
\section{Introduction}
\label{sec:intro}
\begin{figure}
    \centering
    \includegraphics[width=1\linewidth, trim={0mm 8mm 0mm 0mm}, clip]{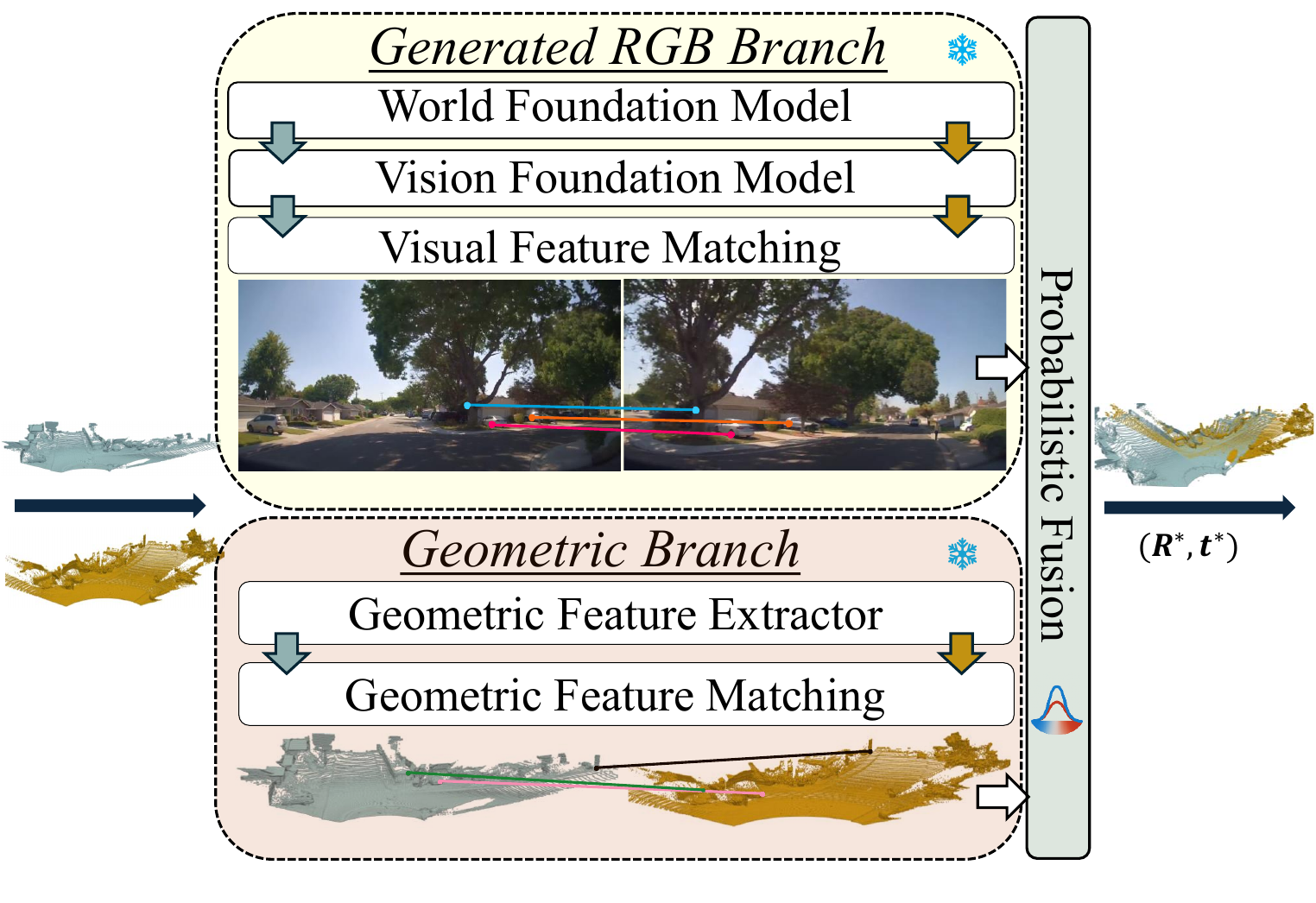}
    \caption{\textbf{\textit{C-GenReg}}: A training-free point cloud registration framework. The pipeline operates in two parallel branches: (1) \textbf{Generated-RGB Branch} - a World Foundation Model generates RGB views that are geometrically aligned with the input \textbf{\textcolor{sourceclr}{source}} and \textbf{\textcolor{targetclr}{target}} point clouds and visually consistent across the two viewpoints; a task-specific Vision Foundation Model extracts dense image features and estimates RGB-based correspondences. (2) \textbf{Geometric Branch} - a geometric feature extractor encodes structural cues directly from the raw 3D point clouds and independently produces geometry-based correspondences. The two correspondence probability maps are then fused using our ``\textit{Match-then-Fuse}" probabilistic fusion to yield the final correspondence set for estimating the rigid transformation aligning the two point clouds.}
    \vspace{-15pt}
    \label{fig:teaser}
    
\end{figure}

Point cloud registration is the procedure of aligning multiple 3D scans into a single coordinate system by estimating a rigid transformation between source and target point clouds — a key step in many vision and robotics tasks.
Standard point cloud registration consists of feature extraction, feature matching, and robust pose estimation (\eg RANSAC \cite{ransac1981}). Although learned 3D feature extractors \cite{3dmatch2017,fcgf2019,predetor2021,geotransformer2022} have replaced handcrafted descriptors \cite{fpfh2009,shot2010}, the pipeline itself remains unchanged, and performance is still limited primarily by imprecise feature matching.

Despite recent progress, learned 3D feature extractors remain strongly domain-dependent: their performance varies significantly with sensing modality, point density, and acquisition settings. Methods that perform well in indoor RGB-D scenes often degrade on different sensors or outdoor LiDAR data, revealing limited cross-domain generalization.
In contrast, the image domain has largely overcome such generalization limits through Vision Foundation Models (VFMs), which achieve remarkable robustness by training on massive, heterogeneous datasets \cite{vfm_survay}. An analogous foundation model for 3D point clouds is still absent \cite{pc_vfms}. A promising alternative is to transfer 3D geometry into the image modality, \eg generating RGB views from depth or point clouds, so that registration can exploit the strong priors and semantically rich features of pretrained VFMs.

A geometry-to-image transfer is effective for point cloud registration only if the generated RGB views are both (i) multi-view consistent across source and target viewpoints and (ii) geometrically coherent with the underlying 3D structure. Without these properties, the generated images may diverge or introduce geometric distortions, leading to unreliable correspondences. Recent World Foundation Models (WFMs) \cite{cosmos_wfm, cosmos_transfer} encode world-scale priors and multi-view geometric reasoning, enabling them to produce RGB views that remain structurally consistent across viewpoints, making them ideal generative backbones for registration. Importantly, the generated images need not match the true scene appearance, \ie colors and textures may differ, as long as their geometry is preserved across different views.

Existing generative approaches for point cloud registration \cite{gpcr, zeromatch, wang2024freereg} have recently demonstrated the potential of diffusion models for this task. However, these methods primarily rely on single-view generation and lack mechanisms for handling multiple geometrically related views. As a result, they typically require fine-tuning to enforce multi-view consistency; while they can operate without such adaptation, performance often degrades due to inconsistent cross-view geometry. In contrast, our method, \textit{C-GenReg} (stands for \textbf{C}onsistent \textbf{Gen}erative \textbf{Reg}istration), leverages WFMs to generate multi-view–consistent RGB views directly from geometry, eliminating the need for any retraining. Crucially, \textit{C-GenReg} applies a task-specific VFM, trained for dense geometric correspondence estimation \cite{dust3r_cvpr24, mast3r_eccv24, roma}, to these WFM-generated images. This combination produces substantially stronger and more geometrically meaningful correspondences than those obtained with general-purpose VFMs \cite{dino2}, whose representations are not aligned with the matching objective. The task-specific inductive bias reinforces correspondence quality precisely where registration needs it most.

Although transferring 3D geometry to the image domain provides rich visual cues, it does not fully exploit the complementary geometric information present in the original point cloud. To leverage both modalities, we incorporate a principled fusion strategy. Prior works typically use simple feature concatenation \cite{zeromatch, gpcr}, which we find suboptimal as it ignores each modality’s inductive bias and the probabilistic nature of their correspondence predictions. Instead, we introduce a ``\textit{Match-then-Fuse}" scheme that combines two independent correspondence posteriors, one from the WFM + VFM branch and one from the geometric branch, under a conditional-independence assumption. This posterior-level fusion preserves the pretrained structure of each modality, provides calibrated and robust correspondences, and remains fully training-free.

Our approach is fully plug-and-play and operates without any fine-tuning, seamlessly pairing with a wide range of registration-oriented geometric feature extractors and generalizing across both indoor RGB-D and outdoor LiDAR settings. Notably, it is, to the best of our knowledge, the first generative registration framework to operate successfully on real LiDAR data.

The main contributions of this paper are:
\begin{enumerate}
    \item We introduce \textit{C-GenReg}, a two-branch framework that (i) performs geometry-to-RGB transfer using a zero-shot, multi-view-consistent World Foundation Model (WFM) combined with a registration-oriented Vision Foundation Model (VFM), and (ii) extracts geometric features directly from the raw point clouds. These complementary branches provide substantially stronger and more reliable correspondences than relying solely on general-purpose VFMs or geometry alone.
    \item We introduce a probabilistic ``\textit{Match-then-Fuse}" scheme that combines independent correspondence posteriors from the WFM+VFM imagery and the geometric branches, yielding calibrated and robust matches without any learning.
    \item Our pipeline operates fully zero-shot and requires no additional training, including RGB generation, correspondence estimation, and fusion. It relies solely on off-the-shelf pretrained models, making it a plug-and-play module that enhances existing geometric registration pipelines.
    \item We achieve SOTA zero-shot results across indoor RGB-D benchmarks (3DMatch, ScanNet) and, for the first time, demonstrate a generative registration framework that successfully operates on real outdoor LiDAR data (Waymo).
\end{enumerate}
\begin{figure*}
    \centering
    \includegraphics[width=0.9\linewidth]{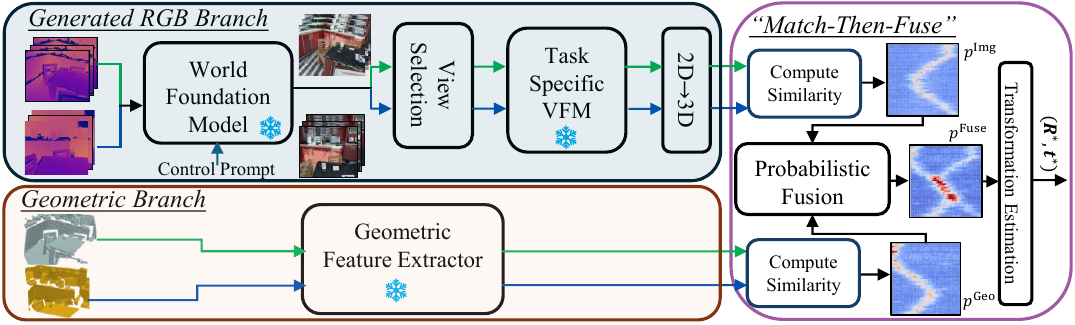}
     \caption{\textbf{\textit{C-GenReg} Overview}: A training-free, zero-shot point cloud registration framework with two parallel branches. (1) \textbf{Generated-RGB Branch} - \textbf{\textcolor{green!50!black}{source}} and \textbf{\textcolor{blue!70!black}{target}} point clouds are each represented as depth-frame sequences, temporally concatenated and processed by a frozen \textit{World Foundation Model} to generate RGB views that are geometrically aligned and appearance-consistent across views. A subset of $K$ frames per domain is fed to a frozen, task-specific \textit{Vision Foundation Model} (VFM) to extract dense pixel-level features, later lifted to 3D using the original depths. (2) \textbf{Geometric Branch} - extracts dense geometric features directly from the raw point clouds using a pretrained geometric feature extractor. Each modality yields a posterior correspondence map, $p^{\text{img}}$ and $p^{\text{geo}}$, which are fused via the proposed \textit{“Match-then-Fuse”} probabilistic module into a unified posterior $p^{\text{fuse}}$, from which the final rigid transformation is estimated.}
    \label{fig:method_scheme}
\end{figure*}
\vspace{-5pt}
\section{Related Work}\label{sec:related_work}
\paragraph{Hand-crafted Registration Methods.}
Early point cloud registration pipelines relied on handcrafted local descriptors such as FPFH \cite{fpfh2009} and SHOT \cite{shot2010}. These descriptors encode neighborhood geometry through histograms of normals or curvature and are typically matched via RANSAC \cite{ransac1981} and refined with ICP \cite{icp}. While effective for small-scale rigid alignment, such handcrafted features are highly sensitive to sampling density, noise, and partial overlap, limiting their robustness in complex real-world environments.
\vspace{-10pt}
\paragraph{Learning-based Registration Methods.}
With the rise of deep learning, data-driven 3D feature extractors have replaced handcrafted descriptors, providing stronger invariance and generalization. FCGF \cite{fcgf2019} introduced fully convolutional geometric features trained with contrastive learning for dense correspondence. Predator \cite{predetor2021} enhanced robustness under low overlap by combining overlap prediction and attentive feature refinement. GeoTransformer \cite{geotransformer2022} modeled spatial relations through geometric self-attention with relative positional encoding, achieving accurate alignment in cluttered scenes. RoITr \cite{roitr} embedded Point-Pair-Feature coordinates into a transformer backbone to achieve rotation invariance and distinctive geometric correspondences.

Beyond geometry-only networks, several learnable RGB-D approaches jointly utilize color and geometry to improve matching robustness. PointMBF \cite{yuan2023pointmbf} introduced a learnable dual-branch architecture that performs multi-scale bidirectional fusion between RGB and depth features through mutual attention. Unsupervised R\&R \cite{uur} trains an end-to-end RGB-D registration network without pose labels by enforcing photometric and geometric consistency using differentiable rendering. ColorPCR \cite{colorpcr} jointly learns color and geometry via a hierarchical color-enhanced feature extractor and a geo-color superpoint matching module, demonstrating improved robustness in challenging scenarios with limited geometric distinctiveness.
While these RGB-D methods demonstrate the potential of multimodal learning, they all rely on real RGB inputs and task-specific training, restricting their applicability in scenarios where the observations are point clouds only.
\vspace{-10pt}

\paragraph{Generative Based Registration Methods.}
Diffusion-based generative models, such as Stable Diffusion (SD) \cite{stablediffusion} and ControlNet \cite{controlnet}, have shown that conditioning on structural cues, particularly depth maps, enables geometry-aligned and semantically consistent image generation. This capability has recently inspired works that exploit generative priors to improve geometric alignment in registration tasks. GPCR \cite{gpcr} was the first to introduce the paradigm of leveraging geometry-to-image generation for geometry-only point cloud registration. GPCR aligns depth-map-based point clouds by synthesizing RGB images from depth inputs through a diffusion-based generative model that is fine-tuned to enforce cross-view consistency. In contrast, our method provides an alternative route for incorporating generative priors to point cloud registration. Rather than enforcing multi-view consistency through fine-tuning, we leverage a pretrained WFM that provides multi-view–consistent image generation out of the box, enabling a zero-shot operating regime with improved generalization across datasets and sensing modalities. In addition, our framework extracts correspondences from the generated images using a task-specific VFM designed for dense geometric matching. Crucially, we introduce a probabilistic \textit{match-then-fuse} formulation that combines geometric and image-derived correspondences while preserving modality-specific inductive biases, in contrast to early feature fusion strategies such as those used in GPCR.

ZeroMatch \cite{zeromatch} and FreeReg \cite{wang2024freereg} also explore the use of generative models in registration, but under different formulations. ZeroMatch enhances real RGB inputs using SD and leverages diffusion-based features to perform RGB-D cross-view registration. FreeReg tackles the RGB-to-depth registration problem by employing a generative model to bridge the modality gap between color and geometry. While both methods demonstrate the applicability of diffusion-based priors, they rely on real RGB observations and address tasks distinct from our geometry-only point cloud registration; nonetheless, they are noteworthy examples of generative approaches in this broader context.
\vspace{-5pt}
\section{Method}\label{sec:method}
\begin{figure}[t]
    \centering
    \includegraphics[
        page=2,
        width=\linewidth,
        trim={10mm 0mm 60mm 0mm},
        clip
    ]{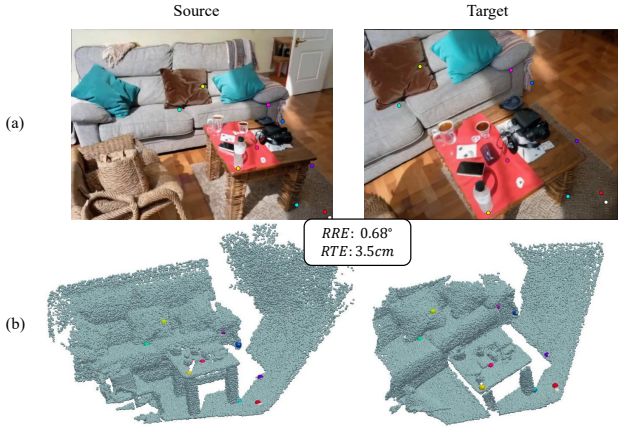}
    \caption{
        \textbf{C-GenReg qualitative example on  3DMatch.}
        Generated source and target images with a subset of matched points (color-coded correspondences), and the corresponding matches visualized on the input point clouds. The resulting rotation (RRE) and translation (RTE) errors are reported.
    }
    \label{fig:qual_matches_3dmatch_example2}
    \vspace{-10pt}
\end{figure}
\vspace{-5pt}
\subsection{Problem Definition}
Given a source point cloud $\cP \in \mathbb{R}^{N \times 3}$ and a target point cloud $\cQ \in \mathbb{R}^{M \times 3}$, the goal of point cloud registration is to estimate a rigid transformation $(\bsR, \bst) \in \mathrm{SE}(3)$ that aligns $\cP$ to $\cQ$.
If the ground-truth correspondences $\cC^* = \{(\bsp_i^*, \bsq_i^*) \mid \bsp_i^* \in \cP, \bsq_i^* \in \cQ\}$ were known, the optimal transformation is obtained by solving:
\vspace{-5pt}
\begin{equation}\label{eq:reg_def}
    \argmin_{\substack{(\bsR, \bst) \in \mathrm{SE}(3)}}
    \sum_{(\bsp^*, \bsq^*) \in \cC^*} \|\bsR \bsp_i^* + \bst - \bsq_i^*\|_2^2
    \vspace{-7pt}
\end{equation}
which has a closed form solution \cite{horn1987closed}. However, $\cC^*$ is unknown in practice, and the core challenge is to establish reliable correspondences between $\cP$ and $\cQ$. Most learning-based methods address this by extracting discriminative point-wise feature descriptors and match point pairs based on feature similarity.
\vspace{-3pt}
\subsection{C-GenReg - Overview}
\begin{figure}
    \centering
    \includegraphics[width=0.9\linewidth]{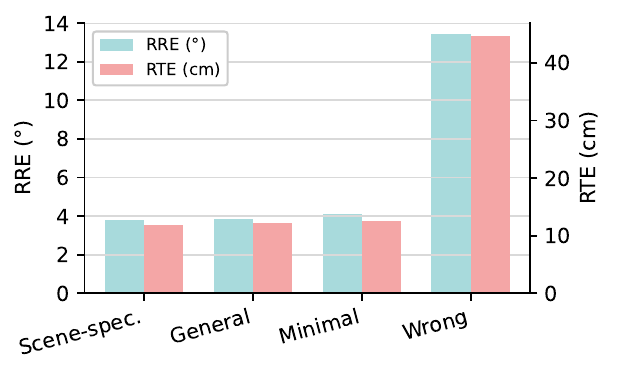}
    \vspace{-10pt}
    \caption{
\textbf{Prompt robustness on 3DMatch.}
Relative rotation (RRE,$^\circ$) and translation (RTE, cm) errors under different prompt types.
}
    \vspace{-15pt}
    \label{fig:prompt_robustness}
\end{figure}
\textit{C-GenReg} extracts complementary features for point cloud registration through a dual-branch architecture followed by a probabilistic fusion stage (\cref{fig:method_scheme}). From each input point cloud, we render a depth map and use the Cosmos-Transfer WFM \cite{cosmos_transfer} to generate multi-view-consistent RGB images that preserve geometric coherence across viewpoints. A task-specific VFM pretrained for dense geometric matching then extracts 2D features from these synthesized views, which are lifted back to 3D using the original depth to obtain per-point descriptors. In parallel, the geometric branch encodes the raw point clouds using a pretrained registration-oriented 3D feature extractor, yielding complementary geometric descriptors.

To integrate the two modalities, we adopt a ``\textit{Match-then-Fuse}" probabilistic strategy. Each branch produces a point-wise similarity matrix that is converted into a correspondence posterior, and the two posteriors are fused through a joint probabilistic model to obtain a single calibrated correspondence distribution. Putative matches sampled from this fused posterior are finally used to estimate the rigid transformation $(\bsR,\bst)$ via \cref{eq:reg_def}. The following subsections detail each component of the pipeline.

\subsection{Generated-RGB Branch}
\paragraph{World Foundation Model Consistent Generation.}
To transfer the original point-cloud geometric representation to the visual domain, we employ a WFM to generate synthetic RGB images from depth inputs. These generated images must satisfy two key properties: (1) they should be geometrically coherent, i.e., consistent with the input depth geometry, and (2) cross-view consistent, such that images rendered from different viewpoints of the same scene exhibit mutually coherent appearance.
To meet these requirements, we utilize Cosmos-Transfer \cite{cosmos_wfm, cosmos_transfer}, a WFM with strong world priors. Cosmos-Transfer supports controllable world generation from multiple modalities (e.g., segmentation, edges, depth) and is particularly effective in producing multi-view consistent RGB videos from depth control signals.

In common 3D datasets such as 3DMatch and ScanNet \cite{3dmatch2017, dai2017scannet}, point clouds are constructed by aggregating a temporal sequence of $L$ depth frames $\{D\}_{l=1}^L$ into a single voxel grid or TSDF volume \cite{tsdf}. We use this temporal depth sequence as the conditioning signal for Cosmos-Transfer. When the data is provided as LiDAR point clouds, we simulate the same input format by mounting a virtual camera and projecting the 3D points onto it to obtain depth maps (see \cref{sec:method_eval} for details).

Cosmos-Transfer expects a depth video input; therefore, we concatenate the source and target depth sequences temporally to form a single depth video. Additional details and analysis of this design choice are provided in Appendix \ref{sec:supp_temporal_concat}.

\vspace{-10pt}
\paragraph{Prompt Guidance.}
To ensure coherent and controllable generation, we use prompt-based text guidance with a fixed structure: a shared prefix that instructs the model to interpret the concatenated input as two correlated sequences while enforcing photorealism and multi-view consistency, followed by a short scene description that provides semantic context. We evaluate prompt robustness using four prompt categories that differ only in the scene description: (1) \textit{scene-specific}: ``modern home kitchen with red cabinets and a wooden dining table'', (2) \textit{general}: ``a kitchen'', (3) \textit{minimal}: ``indoor scene'', and (4) \textit{semantically wrong}: ``snowy forest''. As shown in \cref{fig:prompt_robustness}, replacing a detailed scene description with a general one results in negligible degradation, while even a minimal prompt maintains reasonably strong performance; in contrast, a semantically incorrect prompt substantially degrades registration accuracy. These results indicate that detailed semantic information is not required for successful registration, as coarse or minimal scene descriptions already provide sufficient guidance. Importantly, the availability of such coarse scene context (e.g., indoor/outdoor, street, office, laboratory) is natural in most registration scenarios, where it can typically be inferred from metadata or the acquisition environment. The prompt therefore mainly acts as a lightweight semantic stabilizer for the generative process while preserving geometric fidelity and cross-view coherence.

\vspace{-10pt}
\paragraph{Task-Specific Vision Foundation Model.}
To extract discriminative features from the generated RGB images, we employ a task-specific VFM tailored for image matching and registration. Specifically, we use MASt3R \cite{mast3r_eccv24}, a VFM trained to produce dense correspondence-aware features. This choice is motivated by the inductive bias and feature structure of task-oriented VFMs, which better aligns with the objectives of geometric registration compared to general-purpose vision models. We further validate this design in our ablation study (\cref{sec:ablation_study}), where task-specific VFMs yield a clear performance gain over general-purpose alternatives.
\vspace{-10pt}

\paragraph{View Selection.}
MASt3R operates on pairs of source and target images through a cross-attention–based decoder, where the extracted features for each image are conditioned on the paired counterpart. Consequently, a given source image produces different feature maps when evaluated with different target images. To exploit this property, we sample $K$ views from each domain and evaluate all pairwise combinations, resulting in $K^2$ conditioned feature maps per domain. While increasing $K$ improves viewpoint coverage, it also increases computational cost. Since the original $L$ frames within each sequence are highly correlated, we find that relatively small values of $K$ provide sufficient representational diversity. In practice, we therefore set $K \ll L$ to balance efficiency and performance. Additional analysis of the view selection strategy is provided in Appendix \ref{sec:view_selection}.

\vspace{-10pt}
\paragraph{2D to 3D Lifting.}
Since the generated RGB frames originate from depth inputs, we can lift the 2D image features back to 3D space using the known depth camera intrinsics (or virtual intrinsics for LiDAR data). Each 3D point is thus assigned the feature of its corresponding image pixel. In practice, the dense image features $(HW)$ greatly exceed the number of points used in the geometric branch $(N_{\text{src}},N_{\text{tgt}})$, since the latter are typically voxel-downsampled. To align both modalities, we perform a nearest-neighbor query from the observed point clouds to the dense lifted points, assigning each  point in the point cloud the feature of its closest image-based neighbor. This produces feature representations of consistent size for both modalities, $F^{\text{img}}_{n}\in \mathbb{R}^{K^2 \times N_{n} \times d_{\text{img}}}$ and $F^{\text{geo}}_{n}\in \mathbb{R}^{N_{n} \times d_{\text{geo}}}$, where $n\in\{\text{src}, \text{tgt}\}$.

\subsection{Geometric Branch}
The parallel geometric branch  directly processes the input point clouds. Its goal is to extract geometry-oriented features that may not be fully captured by the image-based representation. For this purpose, we employ a frozen point cloud feature extractor pretrained for matching and registration tasks. Multiple architectures can serve as the geometric backbone, as long as they provide dense per-point feature representations for both source and target clouds, defined as \( F^{\text{geo}}_{\text{src}} \in \mathbb{R}^{N_{\text{src}} \times d_{\text{geo}}} \) and \( F^{\text{geo}}_{\text{tgt}} \in \mathbb{R}^{N_{\text{tgt}} \times d_{\text{geo}}} \). In our ablation study (\cref{sec:ablation_study}), we evaluate several candidates and find that GeoTransformer \cite{geotransformer2022} yields the best performance results; therefore, we adopt it in our final design.

\subsection{Match-then-Fuse Probabilistic Fusion}
The fusion module is designed with two main objectives: (1) to preserve the inductive biases of the pretrained feature extractors, which are optimized for point matching- each for its own domain, and (2) to remain entirely non-trainable, keeping the overall framework training-free. To meet these goals, we propose a \emph{“match-then-fuse”} probabilistic strategy, where putative correspondences are first established independently for each modality by computing feature similarity matrices between source and target points. Each similarity matrix is then converted into a posterior distribution using a row-wise softmax operation, and the resulting modality-specific posteriors are fused in a probabilistic manner to produce a unified correspondence matrix. This fused probability map serves as the basis for estimating the final point-to-point matches and the rigid transformation.

\begin{table*}
\centering
\setlength{\tabcolsep}{2pt} % tighter column spacing
\renewcommand{\arraystretch}{1.05} % slightly more compact rows
\resizebox{0.7\textwidth}{!}{%
\begin{tabular}{@{}p{2.1cm} l
                c c c c c
                c c c c c@{}}
\Xhline{1.3pt}
\multicolumn{2}{c}{} &
\multicolumn{5}{c}{\textbf{Rotation} $[deg]$} &
\multicolumn{5}{c}{\textbf{Translation} $[cm]$} \\
\cmidrule(lr){3-7} \cmidrule(l){8-12}
\multicolumn{2}{c}{} &
\multicolumn{3}{c}{\textbf{Accuracy} $\uparrow$} &
\multicolumn{2}{c}{\textbf{Error} $\downarrow$} &
\multicolumn{3}{c}{\textbf{Accuracy} $\uparrow$} &
\multicolumn{2}{c}{\textbf{Error} $\downarrow$} \\
\cmidrule(lr){3-5} \cmidrule(lr){6-7} \cmidrule(lr){8-10} \cmidrule(l){11-12}
\arraybackslash \textbf{Type} & \textbf{Method} &
$5$ & $10$ & $45$ &
Mean & Med. &
$5$ & $10$ & $25$ &
Mean & Med. \\
\Xhline{1.3pt}
\multirow{1}{*}{\centering Hand-Crafted} 
  & FPFH \cite{fpfh2009}
  & 41.4 & 56.7 & 73.3 & 39.2 & 7.1  & 17.5 & 35.1 & 50.9 & 79.5 & 23.5 \\
\Xhline{0.9pt}
\multirow{6}{*}{\centering\makecell{Learned\\(PC Only)}} 
  & GeoTrans. \cite{geotransformer2022}
  & 88.9 & 91.8 & 93.3 & 12.0 & \underline{1.4} & \textbf{59.8} & 81.0 & 90.1 & 24.6 & \textbf{4.0} \\
  & FCGF \cite{fcgf2019}
  & 90.4 & 93.7 & 94.8 & 9.4 & \underline{1.4} & 53.4 & 79.3 & 91.0 & 19.2 & 4.7 \\
  & Predator \cite{predetor2021}
  & 85.0 & 91.5 & 94.2 & 10.5 & 2.0 & 42.1 & 72.5 & 87.1 & 22.6 & 5.8 \\
  & RoITr \cite{roitr}
  & 86.3 & 91.1 & 93.8 & 11.1 & 1.6 & 51.2 & 77.4 & 89.1 & 20.5 & 4.9  \\
  & GPCR \cite{gpcr}
  & \textbf{94.3} & \underline{96.7} & \underline{98.1} & \underline{4.5} & \underline{1.4} & 54.3 & \underline{81.5} & \underline{93.1} & \underline{12.5} & 4.7 \\
  \cdashline{2-12}[0.8pt/2pt]
  & \textbf{C-GenReg (Ours)}
  & \underline{94.2} & \textbf{97.5} & \textbf{98.3} & \textbf{3.8} & \textbf{1.3} & \underline{57.5} & \textbf{82.0} & \textbf{95.7} & \textbf{11.9} & \underline{4.3} \\
\Xhline{0.9pt}
\multirow{3}{*}{\centering\makecell{Learned\\(RGB-D)}} 
  & PointMBF \cite{yuan2023pointmbf}
  & 80.9 & 86.4 & 92.4 & 12.1 & 1.6 & 52.2 & 73.6 & 85.1 & 24.5 & 4.7 \\
  & ZeroMatch \cite{zeromatch}
  & 93.5 & 97.1 & 98.4 & 3.6 & 1.4 & 52.8 & 81.0 & 94.1 & 10.8 & 4.7 \\
  & C-GenReg (Oracle)
  & 95.1 & 99.6 & 99.8 & 2.1 & 1.4 & 62.2 & 84.1 & 98.3 & 7.3 & 3.8 \\
\Xhline{1.3pt}
\end{tabular}%
}
\caption{\textbf{3DMatch Benchmark.} Rotation and translation accuracy (\% of pairs within RRE/RTE thresholds in $deg$ and $cm$ respectively) and mean/median error across different methods. RGB-D baselines are included as complementary reference. Best in \textbf{bold}, second-best in \underline{underlined}.}
\label{tab:3dmatch_res}
\end{table*}
\vspace{-10pt}
\paragraph{Modality Correspondence Posterior.}
Let $M_{ij}$ be a binary random variable indicating whether source point $i$ corresponds to target point $j$ and $S_{ij}$ their features similarity.
To approximate the modality-specific correspondence posterior
$ \Pr(M_{ij} | S_{ij}^{m}) $, where $ m \in \{\mathrm{geo}, \mathrm{img}\} $,
we first compute the source-target feature similarity matrices for each modality and then apply a row-wise softmax normalization.
\vspace{-8pt}
\begin{align}
S^{\mathrm{geo}} &= F^{\mathrm{geo}}_{\mathrm{src}} (F^{\mathrm{geo}}_{\mathrm{tgt}})^{\top}, \\
S^{\mathrm{img}} &= \max_{k \in \{1,\dots,K^2\}} 
    F^{\mathrm{img}}_{\mathrm{src},k} (F^{\mathrm{img}}_{\mathrm{tgt},k})^{\top}.
\end{align}
The feature vectors are $\ell_2$-normalized, and for the image branch, the similarity is computed for all $K^2$ view-pair combinations; 
for each point pair $(i,j)$, we retain the maximal similarity across the view dimension, 
capturing the best cross-view match. 
Finally, the modality correspondence posterior is obtained as:
\vspace{-5pt}
\begin{equation}\label{eq:modality_posterior}
p_{ij}^{m} \triangleq \Pr(M_{ij}=1 \mid S_{ij}^{m})
= \operatorname{Softmax}_{j} \big(S_{ij}^{m} / \tau_{m}\big),
\vspace{-5pt}
\end{equation}
where $\tau_{m}$ is a temperature parameter.
\vspace{-5pt}
\paragraph{Joint Posterior Fusion \emph{(Noisy-AND)}.}
We propose a Joint Posterior Fusion that combines the modality correspondence posteriors from the image and geometric branches:
\vspace{-5pt}
\begin{equation}
p^{\text{fuse}}_{ij}\triangleq \Pr( M_{ij}=1\mid S_{ij}^{img}, S_{ij}^{geo})   
\vspace{-3pt}
\end{equation}
The fused posterior favors correspondences jointly supported by both modalities and thus acts as a probabilistic \emph{Noisy-AND}, where mutual agreement increases confidence.
Each branch operates independently using frozen pretrained models with distinct modality priors, WFM + VFM for texture and Geometric Feature Extractor for geometry. Although the RGB features originate from depth, the two branches process information in fundamentally different domains; conditioned on the true correspondence $M_{ij}$, their remaining dependence is negligible. We therefore assume conditional independence:
$   S_{ij}^{img}\mathrel{\perp\mspace{-10mu}\perp} S_{ij}^{geo} \mid M_{ij}, ~ \forall(i,j).$ 
Thus,  the joint posterior fusion probability is given by
\vspace{-10pt}
\begin{equation}\label{eq:P_and_fuse}
p^{\text{fuse}}_{ij}
=\frac{p^{\text{img}}_{ij}\,p^{\text{geo}}_{ij}\,(1-\pi_{ij})}
{p^{\text{img}}_{ij}\,p^{\text{geo}}_{ij}\,(1-\pi_{ij})
+\big(1-p^{\text{img}}_{ij}\big)\big(1-p^{\text{geo}}_{ij}\big)\,\pi_{ij}},
\vspace{-5pt}
\end{equation}
where $\pi_{ij} \triangleq  \Pr(M_{ij}=1)$ is the prior matching probability (derivation of \cref{eq:P_and_fuse} is provided in Appendix \ref{sec:nosiy_and_prof}).
\vspace{-15pt}
\paragraph{Disjunctive Posterior Fusion \textit{(Noisy-OR)}.}
The Joint Posterior Fusion \textit{(Noisy-AND)} favors correspondences jointly supported by both modalities but may overlook matches strongly indicated by only one. To address this, we introduce the Disjunctive Posterior Fusion \textit{(Noisy-OR)}, which aggregates evidence in a union-like manner, increasing correspondence confidence when either the image or geometric branch provides strong support.
Let $A_{ij}^{m}$ be a modality activation random variable indicating whether modality $m$ supports the correspondence $(i,j)$. 
We model $A_{ij}^{m}$ as a Bernoulli random variable depending only on its own similarity signal:
$   A_{ij}^{m} \mid S_{ij}^{m} \sim \mathrm{Bernoulli}\!\big(p_{ij}^{m}\big),
    \Pr(A_{ij}^{m}=1 \mid S_{ij}^{m}) = p_{ij}^{m}.$
Under the conditional independence of activations given their respective modality signals, 
the \emph{Noisy-OR} fusion is given by (see Appendix \ref{sec:nosiy_or_prof} for full derivation):
\begin{equation}\label{eq:p_noisy_or_final}
    p^{\text{Noisy-OR}}_{ij}
    = 1 - (1 - p^{\text{img}}_{ij})(1 - p^{\text{geo}}_{ij}),
\end{equation}
Ablation studies (\cref{sec:ablation_study}) compare the two fusion paradigms, with the Noisy-AND ultimately selected as our design choice.
\vspace{-7pt}
\paragraph{Transformation Estimation.}
To recover the rigid transformation, we first derive the correspondence set $\cC$ from the fused posterior matrix $p_{ij}^{\text{fuse}}$ by applying a mutual nearest-neighbor matching strategy. Since no prior information about the correspondence distribution is available, we assume a uniform prior $\pi_{ij}=\frac{1}{N_{\text{src}}N_{\text{tgt}}}$ . The final transformation parameters are then estimated by solving \cref{eq:reg_def}. In line with common practice, we employ a robust estimator to reduce the influence of outliers, using SC2PCR \cite{sc2pcr} similarly to \cite{gpcr}.
\begin{table*}[t]
\centering
\setlength{\tabcolsep}{3pt} % slightly tighter columns
\renewcommand{\arraystretch}{1.05}
\resizebox{\textwidth}{!}{%
\begin{tabular}{@{}l
                c c c c c c c c c c
                !{\vrule width 1.5pt}
                c c c c c c c c c c@{}}
\Xhline{1.5pt}
\multicolumn{1}{c}{} &
\multicolumn{10}{c}{\textbf{ScanNet Hard}} &
\multicolumn{10}{c}{\textbf{ScanNet SuperGlue Split}} \\
\cmidrule(lr){2-11} \cmidrule(l){12-21}
\textbf{Method} &
\multicolumn{5}{c}{\textbf{Rotation} $[deg]$} &
\multicolumn{5}{c}{\textbf{Translation} $[cm]$} &
\multicolumn{5}{c}{\textbf{Rotation} $[deg]$} &
\multicolumn{5}{c}{\textbf{Translation} $[cm]$} \\
\cmidrule(lr){2-6} \cmidrule(lr){7-11} \cmidrule(lr){12-16} \cmidrule(l){17-21}
& $5$ & $10$ & $45$ & Mean & Med. &
$5$ & $10$ & $25$ & Mean & Med. &
$5$ & $10$ & $45$ & Mean & Med. &
$5$ & $10$ & $25$ & Mean & Med. \\
\Xhline{1.5pt}
GeoTrans. \cite{geotransformer2022}
  & 71.5 & 78.0 & 83.4 & 26.2 & 2.0 & 48.4 & 65.2 & 74.6 & 51.9 & 5.2
  & 74.0 & 80.2 & 85.9 & 21.9 & 1.5 & \underline{54.2} & 67.3 & 77.1 & 72.6 & 4.2 \\
FCGF \cite{fcgf2019}
  & 78.9 & 84.2 & 87.5 & 19.4 & \textbf{1.5} & 55.3 & 70.7 & 79.7 & 37.8 & 4.3 & 79.1 & 86.2 & 90.9 & 13.0 & 1.9 & 42.8 & 68.8 & 82.1 & 38.1 & 5.9\\
Predator \cite{predetor2021}
  & 64.3 & 75.2 & 82.6 & 26.3 & 3.2 & 30.1 & 54.8 & 69.2 & 48.7 & 8.4
  & 82.0 & 88.7 & 92.2 & 12.5 & 1.8 & 47.5 & 71.6 & 85.9 & 41.9 & 5.2 \\
RoITr \cite{roitr}
  & 70.0 & 77.5 & 83.7 & 24.1 & 2.3 & 40.3 & 62.3 & 75.1 & 45.6 & 6.5
  & \underline{88.4} & \underline{91.2} & \underline{93.2} & \underline{11.1} & \textbf{1.2} & \textbf{64.8} & \underline{83.0} & \underline{89.1} & \underline{33.8} & \underline{3.4} \\
GPCR \cite{gpcr}
  & \underline{82.9} & \underline{90.0} & \underline{94.4} & \underline{8.4} & \underline{1.6} & \underline{56.4} & \underline{73.0} & \underline{82.7} & \textbf{21.7} & \underline{4.1}
  & \multicolumn{10}{c}{\textit{No results reported}} \\
\Xhline{1pt}
\textbf{C-GenReg}
  & \textbf{88.7} & \textbf{92.9} & \textbf{94.9} & \textbf{7.8} & 1.7
  & \textbf{61.8} & \textbf{79.8} & \textbf{88.1} & \underline{23.0} & \textbf{3.3}
  & \textbf{89.5} & \textbf{92.0} & \textbf{94.6} & \textbf{8.4} & \underline{1.3}
  & \textbf{64.8} & \textbf{83.2} & \textbf{89.6} & \textbf{32.2} & \textbf{3.0} \\
\Xhline{1.5pt}
\end{tabular}%
}
\caption{\textbf{ScanNet Benchmarks.} Rotation and translation accuracy (\% of pairs within RRE/RTE thresholds in $deg$ and $cm$ respectively) and mean/median error on the ScanNet Hard and ScanNet SuperGlue Split benchmarks. Best results are in \textbf{bold}, second-best \underline{underlined}.}
\label{tab:scannet_results}
\end{table*}
\section{Experiments}\label{sec:results}

\begin{table}
\centering
\setlength{\tabcolsep}{1.8pt} % compact horizontal spacing
\renewcommand{\arraystretch}{0.98} % compact vertical spacing
\resizebox{\columnwidth}{!}{%
\begin{tabular}{@{}l
                c c c c c
                c c c c c@{}}
\Xhline{1.3pt}
\multicolumn{1}{c}{} &
\multicolumn{5}{c}{\textbf{Rotation} [$deg$]} &
\multicolumn{5}{c}{\textbf{Translation} [$m$]} \\
\cmidrule(lr){2-6} \cmidrule(l){7-11}
\textbf{Method} &
\multicolumn{3}{c}{\textbf{Accuracy} $\uparrow$} &
\multicolumn{2}{c}{\textbf{Error} $\downarrow$} &
\multicolumn{3}{c}{\textbf{Accuracy} $\uparrow$} &
\multicolumn{2}{c}{\textbf{Error} $\downarrow$} \\
\cmidrule(lr){2-4} \cmidrule(lr){5-6} \cmidrule(lr){7-9} \cmidrule(l){10-11}
& $1$ & $2$ & $5$ & Mean & Med. &
$0.6$ & $1$ & $2$ & Mean & Med. \\
\Xhline{1.3pt}
GeoTrans.
  & 17.0 & 39.6 & 80.8 & 7.3 & 2.5
  & 2.2 & 18.4 & 59.5 & 4.1 & 1.6 \\
FCGF
  & 14.7 & 27.7 & 55.5 & 15.4 & 4.4
  & 2.2 & 5.1 & 26.6 & 7.4 & 10.1 \\
Predator
  & 21.0 & 49.0 & 65.1 & 10.0 & 2.0
  & 1.4 & 13.3 & 61.9 & 4.9 & 1.3 \\
\Xhline{0.9pt}
\textbf{C-GenReg}
  & \textbf{61.8} & \textbf{76.2} & \textbf{86.3} & \textbf{2.4} & \textbf{0.6}
  & \textbf{41.1} & \textbf{52.9} & \textbf{64.6} & \textbf{1.7} & \textbf{0.9} \\
\Xhline{1.3pt}
\end{tabular}%
}
\caption{\textbf{Waymo Outdoor Registration Benchmark.} Rotation ($deg$) and translation ($m$) accuracy/error. Best results are in \textbf{bold}.}
\vspace{-15pt}
\label{tab:waymo_results}
\end{table}
\vspace{-3pt}
\subsection{Experimental Settings}
\paragraph{Datasets and Benchmarks.}
We evaluate our method on two benchmark types: indoor datasets captured by depth sensors and outdoor dataset acquired by LiDAR. For indoor evaluation, we adopt the widely used 3DMatch and ScanNet benchmarks \cite{3dmatch2017, dai2017scannet}, where 3DMatch serves as the primary evaluation set and ScanNet as a cross-dataset generalization benchmark. For outdoor evaluation, we employ the Waymo Open Dataset \cite{waymo}, which contains large-scale LiDAR scans, and serves as a generalization benchmark for outdoor registration tasks. Additional benchmarks, including low-overlap evaluation, are provided in Appendix \ref{sec:addtional_res_supp}.
 \vspace{-10pt}
\paragraph{Implementation Details.}
For the \textit{C-GenReg} pipeline, we employ Cosmos-Transfer-v1 (Depth) \cite{cosmos_transfer} as the WFM and MASt3R \cite{mast3r_eccv24} as the VFM. The geometric feature extractor is based on GeoTransformer \cite{geotransformer2022}, while the probabilistic fusion module follows the Noisy-AND formulation. The feature dimensions of the respective models are $d_{img}=24$ and $d_{geo}=256$. For the VFM branch, we use $K=4$ input views from the $L=50$ views, and the probability temperature parameter is $\tau_m=0.1$. All models in the pipeline are kept frozen with their publicly released pretrained weights, without any additional fine-tuning. Additional implementation details including runtime analysis are provided in Appendices  \ref{sec:implementation_details_supp} and \ref{sec:runtime_supp}.
 \vspace{-10pt}
\paragraph{Metrics.}
We follow the standard evaluation protocol for point cloud registration \cite{gpcr, uur, yuan2023pointmbf, zeromatch}, reporting the Relative Rotation Error (RRE) and Relative Translation Error (RTE). For each benchmark, we report both the mean and median values of these errors, as well as the registration accuracy - the percentage of registration problems with an error below a given threshold.

\begin{table*}[t]
\centering
\setlength{\tabcolsep}{2.2pt}
\renewcommand{\arraystretch}{1.05}
\resizebox{0.7\textwidth}{!}{%
\begin{tabular}{@{}l c c
                c c c c c
                c c c c c@{}}
\Xhline{1.5pt}
\multicolumn{3}{c}{} &
\multicolumn{5}{c}{\textbf{Rotation} $[deg]$} &
\multicolumn{5}{c}{\textbf{Translation} $[cm]$} \\
\cmidrule(lr){4-8} \cmidrule(l){9-13}
\textbf{VFM} & \textbf{Geo. Features} & \textbf{Fusion} &
\multicolumn{3}{c}{\textbf{Accuracy} $\uparrow$} &
\multicolumn{2}{c}{\textbf{Error} $\downarrow$} &
\multicolumn{3}{c}{\textbf{Accuracy} $\uparrow$} &
\multicolumn{2}{c}{\textbf{Error} $\downarrow$} \\
\cmidrule(lr){4-6} \cmidrule(lr){7-8} \cmidrule(lr){9-11} \cmidrule(l){12-13}
& & & $5$ & $10$ & $45$ & Mean & Med. & $5$ & $10$ & $25$ & Mean & Med. \\
\Xhline{1.5pt}
\multicolumn{3}{c}{\underline{\textit{VFM Ablation}}} & & & & & & & & & &\\
DINOv2 \cite{dino2} & \multirow{3}{*}{--} & \multirow{3}{*}{--} 
  & 57.6 & 67.2 & 79.4 & 27.4 & 3.5 & 26.1 & 46.7 & 62.1 & 73.3 & 11.8 \\
RoMa \cite{roma} &  &  
  & 82.6 & \textbf{89.9} & \textbf{94.6} & \textbf{9.0} & \textbf{2.2} & 34.7 & 66.1 & 81.4 & 34.5 & 6.8 \\
MASt3R \cite{mast3r_eccv24} &  &  
  & \textbf{82.7} & 87.2 & 91.1 & 11.7 & 2.4 & \textbf{38.9} & \textbf{66.3} & \textbf{82.9} & \textbf{32.5} & \textbf{6.4} \\
\Xhline{1pt}
\multicolumn{3}{c}{\underline{\textit{Geo. Features and Fusion Ablation}}} & & & & & & & & & &\\
\multirow{3}{*}{MASt3R} & \multirow{3}{*}{FCGF} & Concat.
  & 85.2 & 91.3 & 94.6 & 9.3 & 1.7 & 37.3 & 69.6 & 88.1 & 26.2 & 6.4 \\
&  & Noisy OR
  & 90.6 & 95.8 & 97.1 & 5.2 & 1.4 & 45.0 & 76.5 & 93.0 & 16.1 & 5.4 \\
&  & Noisy AND
  & 91.2 & 96.0 & 97.1 & 5.1 & 1.5 & 55.0 & 80.4 & 93.1 & 16.2 & 4.6 \\
\Xhline{0.9pt}
\multirow{3}{*}{MASt3R} & \multirow{3}{*}{Predator} & Concat.
  & 80.6 & 86.4 & 89.9 & 14.1 & 1.8 & 35.9 & 64.9 & 83.8 & 37.9 & 6.7 \\
&  & Noisy OR
  & 88.1 & 92.7 & 94.9 & 8.9 & 1.6 & 40.0 & 70.8 & 90.8 & 22.0 & 5.9 \\
&  & Noisy AND
  & 88.6 & 92.8 & 95.1 & 8.8 & 1.6 & 42.9 & 72.5 & 90.9 & 19.5 & 4.8 \\
\Xhline{0.9pt}
\multirow{3}{*}{MASt3R} & \multirow{3}{*}{GeoTrans.} & Concat.
  & 79.4 & 83.4 & 86.1 & 21.9 & 1.6 & 46.5 & 70.1 & 81.2 & 60.1 & 5.5 \\
&  & Noisy OR
  & 94.2 & 97.4 & 98.0 & 3.9 & 1.3 & 51.3 & 82.0 & 95.7 & 12.1 & 4.8 \\
&  & Noisy AND
  & \textbf{94.2} & \textbf{97.5} & \textbf{98.3} & \textbf{3.8} & \textbf{1.3}
  & \textbf{57.5} & \textbf{82.1} & \textbf{95.7} & \textbf{11.9} & \textbf{4.3} \\
\Xhline{1.5pt}
\end{tabular}%
}
\caption{\textbf{Ablation Study on the 3DMatch Benchmark.} Top: impact of different Vision Foundation Models (no geometric features or fusion).
Bottom: impact of geometric feature extractors and fusion operators (using MASt3R as the VFM). Best in \textbf{bold}.}
\label{tab:ablations}
\end{table*}
\subsection{Method Evaluation}\label{sec:method_eval}

\paragraph{3DMatch Benchmark (Indoor).}
We begin by evaluating our method on the widely used 3DMatch benchmark (\cref{tab:3dmatch_res}). C-GenReg is compared against both the hand-crafted descriptor FPFH \cite{fpfh2009} and several state-of-the-art (SOTA) learning-based baselines, including GeoTransformer \cite{geotransformer2022}, FCGF \cite{fcgf2019}, Predator \cite{predetor2021}, RoITr \cite{roitr}, and GPCR \cite{gpcr}. All learning-based methods are trained on the official 3DMatch training split.
Despite this, C-GenReg achieves the best overall performance across most rotation and translation metrics. It reduces the mean RTE by nearly half compared to GeoTransformer and consistently achieves superior rotation accuracy, demonstrating the benefit of our probabilistic fusion. While GeoTransformer attains slightly higher translation accuracy at the strict $5cm$ threshold and a marginally lower median RTE, and GPCR reports a $0.1 pp$ advantage in rotation accuracy at $5^\circ$, C-GenReg surpasses both in the majority of metrics and achieves the lowest overall errors. A qualitative example of the correspondences produced by C-GenReg is shown in \cref{fig:qual_matches_3dmatch_example2}, with further examples available in Appendix \ref{sec:qualitative_supp}.

As a reference, we also compare C-GenReg with RGB-D based registration methods ZeroMatch \cite{zeromatch} and PointMBF \cite{yuan2023pointmbf} that use \textit{real RGB images} as input. Although this comparison is not strictly fair, since C-GenReg relies solely on 3D point cloud inputs, it is noteworthy that C-GenReg achieves comparable results to ZeroMatch and even outperforms PointMBF. For reference, we additionally report C-GenReg-Oracle, which replaces the generated RGB with the real RGB input to provide an empirical upper bound on our pipeline’s potential performance.

\vspace{-10pt}
\paragraph{ScanNet Benchmarks (Indoor).}
To evaluate cross-dataset generalization, we benchmark all methods on the ScanNet indoor registration benchmarks (\cref{tab:scannet_results}). All learning-based approaches are trained on 3DMatch, while evaluation is performed on unseen ScanNet data. We follow the ScanNet Hard protocol introduced in \cite{gpcr,zeromatch}, where source and target frames are 50 frames apart, resulting in significantly lower overlap. As the original pairs were not released, we reconstructed the benchmark following the authors’ instructions. On this split, C-GenReg achieves the best overall performance across most metrics, demonstrating strong generalization to unseen environments. In particular, it ranks first or second on most of the metrics, with GPCR slightly outperforming in median RRE and mean RTE.

For reproducibility, we also report results on the ScanNet SuperGlue split \cite{superglue,yuan2023pointmbf}, which provides official registration pairs and poses a more challenging setting than both the original and Hard splits.
Since GPCR code is unavailable, it is excluded from this comparison. C-GenReg again attains the best or second-best results in nearly all metrics, while RoITr achieves a marginally lower median RRE. Results for the original ScanNet benchmark are given in Appendix \ref{sec:scannet_org_supp}.
\vspace{-10pt}
\paragraph{Waymo Benchmark (Outdoor).}
To assess cross-dataset generalization on outdoor LiDAR data, we evaluate on the Waymo dataset \cite{waymo} (\cref{tab:waymo_results}).
We sample 1,500 registration pairs from the validation split, selecting frame pairs at least 50 frames apart and within $30 m$ based on ground-truth ego motion.
As described in \cref{sec:method}, C-GenReg operates on depth maps; to adapt it for LiDAR, each scan is projected onto a virtual camera to generate a synthetic depth image, after which the same pipeline is applied.
In this experiment, a single forward-facing virtual camera is used (details in Appendix \ref{sec:lidar_supp}).

We compare against learning-based registration methods trained on KITTI \cite{kitti}.
As shown in \cref{tab:waymo_results}, these methods generalize poorly to Waymo due to sensor discrepancies (different beam patterns and densities), whereas C-GenReg achieves substantially better rotation and translation accuracy.
Example generated images from the WFM are shown in \cref{fig:teaser}, with additional qualitative results provided in Appendix \ref{sec:qualitative_supp}.

\subsection{Ablation Studies}\label{sec:ablation_study}
We perform an extensive ablation studies to analyze the contribution of each component in the C-GenReg pipeline. All experiments are conducted on the 3DMatch dataset, and the results are summarized in \cref{tab:ablations}.
\vspace{-15pt}
\paragraph{General Purpose vs. Task Specific VFM.}
We investigate the impact of using a general-purpose vs. a task-specific VFM. As a general model, we use DINOv2 \cite{dino2}, trained in a self-supervised manner on large-scale generic image data. For task-specific VFMs, we evaluate MASt3R \cite{mast3r_eccv24}, trained explicitly for image matching, and RoMa \cite{roma}, a DINO-based model fine-tuned for image registration. In this ablation, the geometric branch is bypassed, and only the generated-image branch is used. As shown in \cref{tab:ablations}, task-specific VFMs dramatically outperform the general-purpose one, achieving roughly $2\times$ lower mean RTE and up to $3\times$ lower mean RRE, highlighting the importance of task alignment. Between the two task-specific models, MASt3R and RoMa achieve comparable results, with MASt3R yielding slightly better translation accuracy; however, we adopt MASt3R in our pipeline due to its denser feature outputs, which integrate more effectively with the probabilistic fusion stage.
\vspace{-12pt}
\paragraph{Geometric Feature Extractor Ablation.}
We evaluate three geometric feature extractors: FCGF, Predator, and GeoTransformer as candidates for our pipeline, using MASt3R as the VFM. As shown in \cref{tab:ablations}, integrating our generated-RGB branch with each geometric backbone consistently improves performance compared to their original baselines in \cref{tab:3dmatch_res}, demonstrating that C-GenReg serves as a general performance enhancer across geometric feature extractors. Among the tested options, GeoTransformer yields the best results and is therefore adopted as our default geometric feature extractor.
\vspace{-13pt}
\paragraph{Fusion Method Ablation.}
To validate our proposed ``\textit{Match-then-Fuse}" paradigm, we compare it with the alternative ``Fuse-then-Match" strategy, which concatenates features from both modalities before matching, as done in \cite{gpcr,zeromatch}. As shown in \cref{tab:ablations}, our probabilistic fusion approach yields a clear performance gain over simple feature concatenation across all geometric backbones, with up to 5× improvement in mean RRE and RTE when using GeoTransformer features.
We further compare our two probabilistic variants: Noisy-AND and Noisy-OR. While both substantially outperform concatenation, Noisy-AND achieves slightly better registration accuracy and consistently produces higher-precision point matches. Since accurate registration depends heavily on a small set of reliable correspondences, this higher precision motivates choosing Noisy-AND as the default fusion operator in C-GenReg. Additional ablations are provided in Appendix \ref{sec:fuse_ablation_supp}.

\vspace{-5pt}
\section{Conclusions}
We presented \textit{C-GenReg}, a training-free framework for 3D point cloud registration that leverages the complementary strengths of world-scale generative priors and registration-oriented VFMs. Current learning-based 3D registration methods struggle to generalize across sensing modalities, sampling differences, and environments. Hence, \textit{C-GenReg} generates an auxiliary image channel, where VFMs excel, by employing a WFM to synthesize auxiliary multi-view-consistent RGB representations from the input geometry. This generative transfer preserves spatial coherence across source and target views without any fine-tuning. From these generated views a VFM pretrained for dense correspondences extracts matches. The resulting pixel correspondences are lifted back to 3D via the original depth maps, producing a set of geometry-aware correspondences. We further introduce a “\textit{Match-then-Fuse}” probabilistic cold-fusion scheme that combines independent correspondence posteriors from the generated-RGB and geometric branches. This procedure preserves each modality's inductive bias and provides calibrated confidence without any additional learning. \textit{C-GenReg} is fully zero-shot and plug-and-play: all modules are pretrained and operate without fine-tuning. Extensive experiments on indoor and outdoor benchmarks demonstrate strong zero-shot performance and superior cross-domain generalization.
{
    \small
    \bibliographystyle{ieeenat_fullname}
    \bibliography{main}
}

% WARNING: do not forget to delete the supplementary pages from your submission 

\newpage
\appendix
\maketitlesupplementary

\section{Probabilistic Fusion Derivation}
\subsection{Noisy-AND Derivation (Eq. 6)}\label{sec:nosiy_and_prof}
\begin{proposition}[Joint Posterior Fusion \emph{(Noisy-AND)}] \label{prop:P_fuse}
Under the conditional independence assumption of the image and geometric evidences 
given the match event, the Joint Posterior Fusion probability satisfies:
\vspace{-5pt}
\begin{equation} \label{eq:P_fuse}
p^{\text{fuse}}_{ij}
=\frac{p^{\text{img}}_{ij}\,p^{\text{geo}}_{ij}\,(1-\pi_{ij})}
{p^{\text{img}}_{ij}\,p^{\text{geo}}_{ij}\,(1-\pi_{ij})
+\big(1-p^{\text{img}}_{ij}\big)\big(1-p^{\text{geo}}_{ij}\big)\,\pi_{ij}}.
\vspace{-4pt}
\end{equation}
\end{proposition}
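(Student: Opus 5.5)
The plan is to apply Bayes' rule to the joint evidence $(S^{\mathrm{img}}_{ij},S^{\mathrm{geo}}_{ij})$, factor the joint likelihood using the conditional independence assumption, and then turn each single-modality likelihood back into a posterior with Bayes' rule again. Fix a pair $(i,j)$ and drop the indices. Write $M=M_{ij}$, $\pi=\Pr(M=1)$, $p^m=\Pr(M=1\mid S^m)$. Throughout I treat the evidences as having densities (or masses) $f$, and I assume $0<\pi<1$ and $0<p^m<1$, so every ratio below is well defined.

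\textbf{Step 1 (Bayes on the joint evidence).} Write
\begin{equation*}
p^{\text{fuse}}=\frac{f(S^{\mathrm{img}},S^{\mathrm{geo}}\mid M=1)\,\pi}{f(S^{\mathrm{img}},S^{\mathrm{geo}}\mid M=1)\,\pi+f(S^{\mathrm{img}},S^{\mathrm{geo}}\mid M=0)\,(1-\pi)}.
\end{equation*}
By the assumption $S^{\mathrm{img}}\mathrel{\perp\mspace{-10mu}\perp} S^{\mathrm{geo}}\mid M$, each joint likelihood factors as $f(S^{\mathrm{img}}\mid M)\,f(S^{\mathrm{geo}}\mid M)$.

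\textbf{Step 2 (single-modality Bayes).} For each modality $m$, Bayes' rule gives
\[
f(S^m\mid M=1)=\frac{p^m f(S^m)}{\pi},\qquad f(S^m\mid M=0)=\frac{(1-p^m)f(S^m)}{1-\pi}.
\]
Substituting these into Step 1, the marginal factor $f(S^{\mathrm{img}})f(S^{\mathrm{geo}})$ cancels between numerator and denominator, leaving
\[
p^{\text{fuse}}=\frac{p^{\mathrm{img}}p^{\mathrm{geo}}/\pi}{p^{\mathrm{img}}p^{\mathrm{geo}}/\pi+(1-p^{\mathrm{img}})(1-p^{\mathrm{geo}})/(1-\pi)}.
\]
Multiplying numerator and denominator by $\pi(1-\pi)$ yields exactly \cref{eq:P_fuse}.

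\textbf{Main obstacle.} The algebra is routine, so the real care lies in the modeling step. The row-wise softmax in \cref{eq:modality_posterior} is only an approximation of $\Pr(M_{ij}=1\mid S^m_{ij})$, and it actually depends on the whole row, not just $S^m_{ij}$. I would therefore state explicitly that the proposition takes $p^m_{ij}$ to be the true single-modality posterior, with the same prior $\pi_{ij}$ used in both branches. The identity is exact under that interpretation; under the softmax surrogate it holds only approximately. I would also note the degenerate cases $p^m\in\{0,1\}$, where the formula should be read by continuity.
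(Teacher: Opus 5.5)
Your proof is correct and follows essentially the same route as the paper: Bayes' rule on the joint evidence, factorization via the conditional independence assumption, then inverting each single-modality Bayes rule to express its likelihood through $p^{m}_{ij}$ and $\pi_{ij}$. The paper writes the same argument in odds form, with posterior odds $=$ prior odds $\times\,\mathrm{LR}^{\text{img}}_{ij}\,\mathrm{LR}^{\text{geo}}_{ij}$ and $\mathrm{LR}^{m}_{ij}=o(p^{m}_{ij})/o(\pi_{ij})$, so the marginal densities you cancel explicitly never appear; this is only a cosmetic difference, and your remarks on the softmax surrogate and the degenerate cases $p^m\in\{0,1\}$ are apt.
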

% Proff
\begin{proof}[\cref{prop:P_fuse}]
Define the odds $o(x)=\frac{x}{1-x}$.

By Bayes rule in odds form and conditional independence of $ S_{ij}^{img}\perp\!\!\!\perp S_{ij}^{geo} \mid M_{ij}$,
\begin{align}
O^{\text{fuse}}_{ij}
&=\frac{\Pr(M_{ij}{=}1\mid S_{ij}^{img}, S_{ij}^{geo})}{\Pr(M_{ij}{=}0\mid S_{ij}^{img}, S_{ij}^{geo})} \nonumber \\
&= O^\pi_{ij}\cdot \mathrm{LR}^{\text{img}}_{ij}\cdot \mathrm{LR}^{\text{geo}}_{ij}, 
\label{eq:odds-ci}
\end{align}
\vspace{-1pt}
where $O^\pi_{ij}=o(\pi_{ij})$ and $\mathrm{LR}^{m}_{ij}=\frac{p(S^{m}_{ij}\mid M_{ij}{=}1)}{p(S^{m}_{ij}\mid M_{ij}{=}0)}$.
Applying Bayes rule to each modality gives:
\begin{align}
O^{\text{img}}_{ij}=o(p^{\text{img}}_{ij})=O^\pi_{ij}\,\mathrm{LR}^{\text{img}}_{ij}, \\
    O^{\text{geo}}_{ij}=o(p^{\text{geo}}_{ij})=O^\pi_{ij}\,\mathrm{LR}^{\text{geo}}_{ij}. 
\end{align}
Hence:
\vspace{-15pt}
\begin{align}
\mathrm{LR}^{\text{img}}_{ij}=\frac{o(p^{\text{img}}_{ij})}{o(\pi_{ij})} \label{eq:LR_img}, \\
\mathrm{LR}^{\text{geo}}_{ij}=\frac{o(p^{\text{geo}}_{ij})}{o(\pi_{ij})}. \label{eq:LR_geo}
\end{align}
Substitute \cref{eq:LR_img} and \cref{eq:LR_geo} into \cref{eq:odds-ci} to obtain
\vspace{-5pt}
\begin{equation}
O^{\text{fuse}}_{ij}
\;=\; O^\pi_{ij}\cdot \frac{o(p^{\text{img}}_{ij})}{o(\pi_{ij})}\cdot \frac{o(p^{\text{geo}}_{ij})}{o(\pi_{ij})}
\;=\; \frac{o(p^{\text{img}}_{ij})\,o(p^{\text{geo}}_{ij})}{o(\pi_{ij})}.
\label{eq:ofuse-odds}
\end{equation}

Finally, writing odds in terms of probabilities and simplifying gives the closed form
\[
p^{\text{fuse}}_{ij}
= \frac{p^{\text{img}}_{ij}\,p^{\text{geo}}_{ij}\,(1-\pi_{ij})}
{p^{\text{img}}_{ij}\,p^{\text{geo}}_{ij}\,(1-\pi_{ij})+\big(1-p^{\text{img}}_{ij}\big)\big(1-p^{\text{geo}}_{ij}\big)\,\pi_{ij}} \ .
\]

\end{proof}
\vspace{-15pt}
\subsection{Noisy-OR Derivation (Eq. 7)}\label{sec:nosiy_or_prof}
\begin{proposition}[Disjunctive Posterior Fusion \textit{(Noisy-OR)}]\label{prop:P_noisy_or}
Let $A_{ij}^{m}$ be a modality activation random variable indicating whether modality $m$ supports the correspondence $(i,j)$. 
We model $A_{ij}^{m}$ as a Bernoulli random variable depending only on its own similarity signal,
\begin{equation}\label{eq:A_def}
    A_{ij}^{m} \mid S_{ij}^{m} \sim \mathrm{Bernoulli}\!\big(p_{ij}^{m}\big),
    \quad
    \Pr(A_{ij}^{m}=1 \mid S_{ij}^{m}) = p_{ij}^{m}.
\end{equation}
Then, under the conditional independence of activations given their respective modality signals, 
the Disjunctive Posterior Fusion (\emph{Noisy-OR}) is given by:
\begin{equation}\label{eq:p_noisy_or_final_supp}
    p^{\text{Noisy-OR}}_{ij}
    = 1 - (1 - p^{\text{img}}_{ij})(1 - p^{\text{geo}}_{ij}).
\end{equation}
\end{proposition}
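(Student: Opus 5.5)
The plan is to define the fused event explicitly as a disjunction of modality activations and then compute its probability through its complement. Set $M^{\text{OR}}_{ij} \triangleq \{A^{\text{img}}_{ij}=1\} \cup \{A^{\text{geo}}_{ij}=1\}$, meaning the correspondence $(i,j)$ is supported if at least one modality activates. The target quantity is $p^{\text{Noisy-OR}}_{ij} \triangleq \Pr(A^{\text{img}}_{ij}=1 \lor A^{\text{geo}}_{ij}=1 \mid S^{\text{img}}_{ij}, S^{\text{geo}}_{ij})$. Fixing this definition is the one modeling choice here; everything after it is routine.

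First, pass to the complement:
\[
p^{\text{Noisy-OR}}_{ij} = 1 - \Pr(A^{\text{img}}_{ij}=0,\ A^{\text{geo}}_{ij}=0 \mid S^{\text{img}}_{ij}, S^{\text{geo}}_{ij}).
\]
Second, use the assumed conditional independence of the activations given the pair of signals to factor the joint as $\Pr(A^{\text{img}}_{ij}=0 \mid S^{\text{img}}_{ij}, S^{\text{geo}}_{ij})\,\Pr(A^{\text{geo}}_{ij}=0 \mid S^{\text{img}}_{ij}, S^{\text{geo}}_{ij})$.

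Third, invoke the model assumption that each $A^{m}_{ij}$ depends only on its own similarity signal. This gives $\Pr(A^{m}_{ij}=0 \mid S^{\text{img}}_{ij}, S^{\text{geo}}_{ij}) = \Pr(A^{m}_{ij}=0 \mid S^{m}_{ij})$. By \cref{eq:A_def}, this equals $1-p^{m}_{ij}$, where $p^{m}_{ij}$ is the softmax posterior of \cref{eq:modality_posterior}. Substituting both factors yields \cref{eq:p_noisy_or_final_supp}.

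The main obstacle is interpretive rather than computational. The phrase ``depending only on its own similarity signal'' must be read as the Markov condition $A^{m}_{ij} \perp\!\!\!\perp S^{m'}_{ij} \mid S^{m}_{ij}$ for $m' \neq m$. The independence statement must likewise be read conditionally on both signals jointly, not on each separately. I would state these two readings explicitly at the start of the proof, so that the factorization in the second and third steps is justified.

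Finally, I would remark that the expression lies in $[0,1]$ and is at least $\max(p^{\text{img}}_{ij}, p^{\text{geo}}_{ij})$, which confirms the intended union-like behavior.
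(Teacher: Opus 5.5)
Your proposal is correct and matches the paper's proof step for step: take the complement, factor it using conditional independence of the activations given both signals, then reduce each factor to $1-p^{m}_{ij}$ via the ``depends only on its own signal'' condition and the Bernoulli model. One refinement in your favor: the paper derives $A^{\text{img}}_{ij} \perp\!\!\!\perp A^{\text{geo}}_{ij} \mid (S^{\text{img}}_{ij}, S^{\text{geo}}_{ij})$ ``as a result'' of the two conditions $A^{m}_{ij} \perp\!\!\!\perp S^{m'}_{ij} \mid S^{m}_{ij}$, but that implication fails in general, since a noise source shared by the two activations can couple them while both conditions still hold. Stating the joint conditional independence as an explicit assumption, as you do, is therefore the more rigorous reading of the hypothesis.
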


\begin{proof}[\cref{prop:P_noisy_or}]
By construction, each activation depends only on its own signal and is therefore independent of the other modality’s evidence once that signal is known,
\begin{equation}\label{eq:A_S_indep}
    A_{ij}^{\text{img}} \perp\!\!\!\perp S_{ij}^{\text{geo}} \mid S_{ij}^{\text{img}},
    \qquad
    A_{ij}^{\text{geo}} \perp\!\!\!\perp S_{ij}^{\text{img}} \mid S_{ij}^{\text{geo}}.
\end{equation}
As a result, the two activations are independent given both similarity signals,
\begin{equation}\label{eq:A_A_indep}
    A_{ij}^{\text{img}} \perp\!\!\!\perp A_{ij}^{\text{geo}} 
    \mid (S_{ij}^{\text{img}}, S_{ij}^{\text{geo}}).
\end{equation}
The Disjunctive Posterior Fusion is defined as the probability that at least one modality supports the correspondence:
\begin{equation}
    p^{\text{Noisy-OR}}_{ij}
    = \Pr(A^{\text{img}}_{ij}=1 \,\vee\, A^{\text{geo}}_{ij}=1 
    \mid S^{\text{img}}_{ij}, S^{\text{geo}}_{ij}).
\end{equation}
Using the complementary event and applying \cref{eq:A_A_indep,eq:A_S_indep} gives:
\begin{align}
    p^{\text{Noisy-OR}}_{ij}
    &= 1 - \Pr(A^{\text{img}}_{ij}=0, A^{\text{geo}}_{ij}=0 
    \mid S^{\text{img}}_{ij}, S^{\text{geo}}_{ij}) \nonumber \\[2pt]
    &= 1 - \Pr(A^{\text{img}}_{ij}=0 \mid S^{\text{img}}_{ij}) 
    \Pr(A^{\text{geo}}_{ij}=0 \mid S^{\text{geo}}_{ij}). \label{eq:p_noisy_or_prof1}
\end{align}
Substituting the Bernoulli definition from \eqref{eq:A_def}, 
$\Pr(A^{m}_{ij}=0 \mid S^{m}_{ij}) = 1 - p^{m}_{ij}$, 
into \eqref{eq:p_noisy_or_prof1} yields 
\eqref{eq:p_noisy_or_final_supp}.
\end{proof}

\section{Methodological Details}
\subsection{Consistent Multi-View Generation: Horizontal vs. Temporal concatenation}\label{sec:supp_temporal_concat}

\begin{figure}
    \centering
    \includegraphics[width=0.8\linewidth,  trim={0mm 140mm 0mm 0mm}, clip]{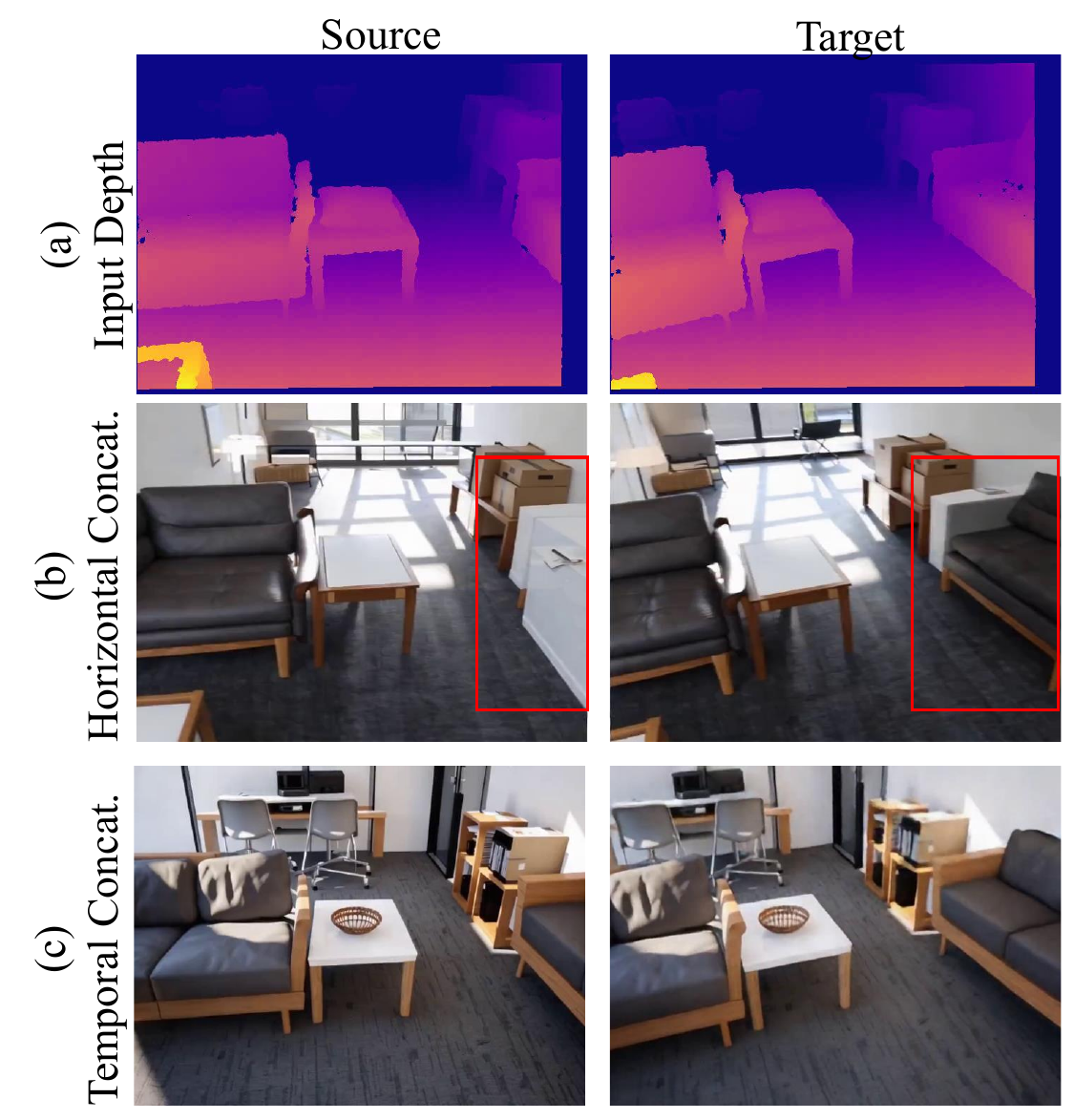}
    \caption{\textbf{WFM Input Formatting.} (a) Input depth maps of the source and target views. (b) Feeding the pretrained WFM with\textit{ horizontally concatenated} depth inputs causes cross-view inconsistencies, e.g., the sofa is mistakenly replaced in the generated source image. (c) Using \textit{temporal concatenation} produces RGB outputs that are geometrically coherent and appearance-consistent between the two views.}
    \vspace{-12pt}
    \label{fig:temp_vs_horz}
\end{figure}
Our method relies on Cosmos-Transfer, a World Foundation Model (WFM) capable of generating multi-view consistent RGB videos from depth inputs. Since Cosmos-Transfer is trained to operate on depth videos, it expects a temporally ordered sequence of depth maps as input.

To adapt this interface to the point cloud registration setting, we construct the WFM input by concatenating the source and target depth sequences along the temporal axis:
\begin{equation}
D^{\text{in}} = \big[\, D^{\text{src}}_1, \dots, D^{\text{src}}_L,\; D^{\text{tgt}}_1, \dots, D^{\text{tgt}}_L \,\big].
\end{equation}

This temporal concatenation forms a single depth video containing the two fragments sequentially. While this configuration does not perfectly match the model’s original training distribution, it remains physically plausible and statistically closer to natural camera motion than alternative layouts such as spatial horizontal concatenation.

The key advantage of temporal concatenation is that it preserves the pretrained multi-view consistency priors of the WFM. Because the model was trained on temporally coherent video sequences, it naturally propagates geometric and semantic information across adjacent frames, including across the transition between the source and target segments.

In contrast, horizontally concatenating the two fragments would introduce an artificial spatial discontinuity that is not present in the model’s training distribution, often resulting in weaker geometric coherence between the generated views. A qualitative comparison between the two strategies is illustrated in \cref{fig:temp_vs_horz}.

Due to the autoregressive nature of video generation, minor visual transients may appear near the transition between the source and target segments. To mitigate this effect, we discard several ``safeguard'' frames around the midpoint of the generated video before extracting features for correspondence estimation.

\subsection{View Selection Strategy}
\label{sec:view_selection}
\begin{figure}
    \centering
    \includegraphics[width=0.9\linewidth]{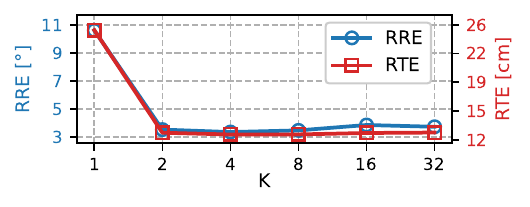}
    \vspace{-10pt}
    \caption{\textbf{Effect of View Selection ($K$).} Registration performance measured by \textcolor{rreclr}{Relative Rotation Error (RRE)} and \textcolor{rteclr}{Relative Translation Error (RTE)} as a function of the number of selected views $K$. Performance saturates for $K\ge 4$, indicating that only a few representative views are sufficient for stable registration.}
    \vspace{-15pt}
    \label{fig:view_selection}
\end{figure}

MASt3R extracts features using a cross-attention decoder that jointly processes pairs of images, meaning that the representation of a given image depends on the image with which it is paired. Consequently, evaluating a source image against different target views produces distinct conditioned feature maps.
To exploit this property, we sample $K$ views uniformly from the $L$ frames of the generated source and target sequences and evaluate all pairwise combinations. This produces $K^2$ conditioned feature maps per domain, denoted as $F^{\text{img}}_{\text{src}}, F^{\text{img}}_{\text{tgt}} \in \mathbb{R}^{K^2 \times H \times W \times d_{\text{img}}}$.

While increasing $K$ improves viewpoint coverage, it also leads to quadratic growth in the number of evaluated image pairs, motivating a careful view-selection strategy. As shown in \cref{fig:view_selection}, both RRE and RTE quickly saturate as $K$ increases, reflecting the high correlation between frames in the generated sequences. This suggests that selecting $K \ll L$ is sufficient to maintain view diversity while keeping the computational cost manageable.

\subsection{C-GenReg for LiDAR Data}\label{sec:lidar_supp}
\begin{figure*}[t]
    \centering
    \includegraphics[width=1\linewidth]{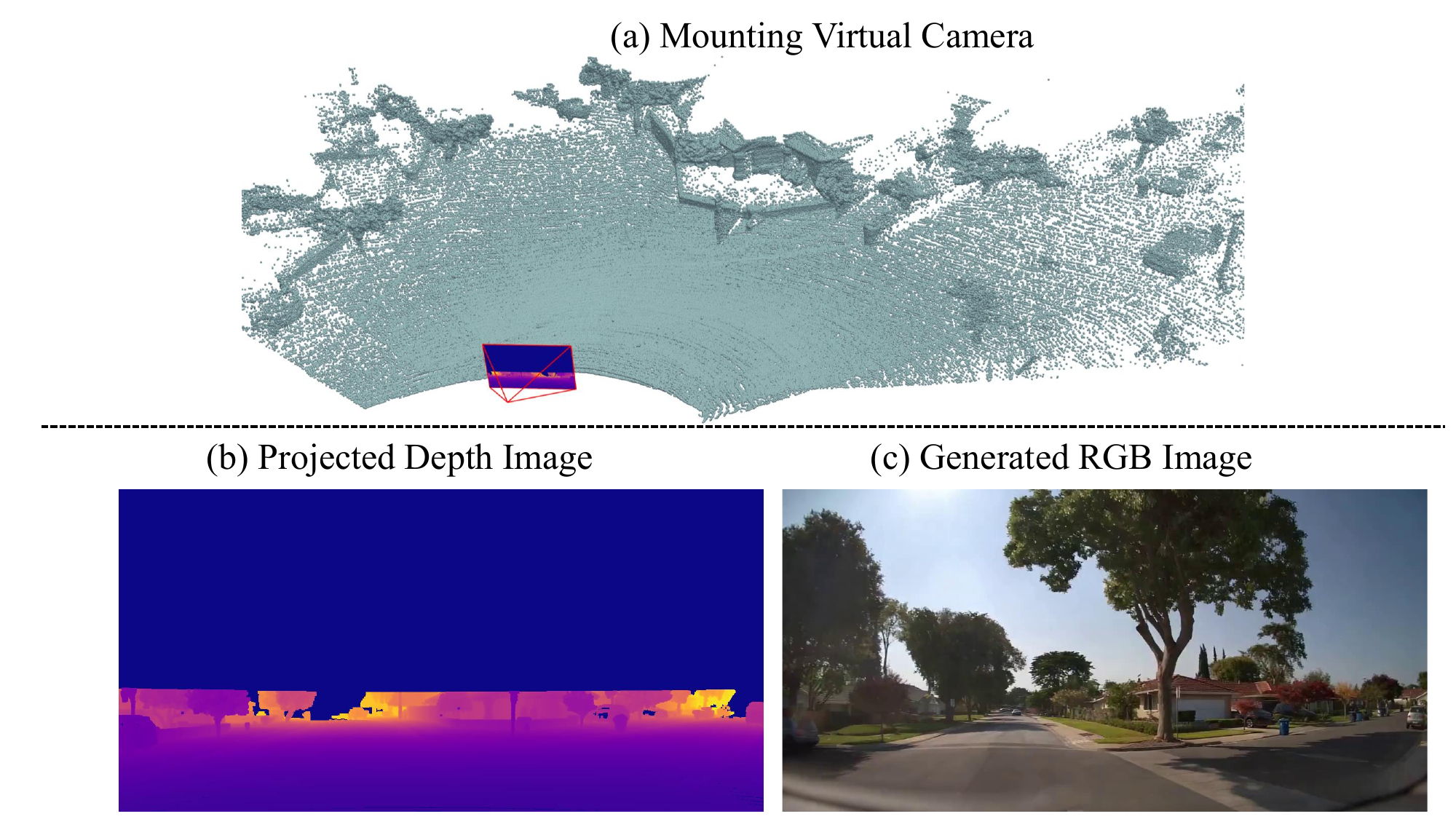}
    \caption{\textbf{C-GenReg LiDAR Input Pipeline: }(a) A virtual camera is configured into the LiDAR scan. (b) The LiDAR points are projected into a depth image. (c) The resulting depth map is fed into the generative model to produce an aligned RGB image.}
    \label{fig:lidar_virtual_cam}
\end{figure*}
Since Cosmos-Transfer expects depth images as input, we introduce a preprocessing stage that converts raw 3D LiDAR scans into a depth-image representation. Following \cite{cosmos_dream_drive}, we project each LiDAR point cloud onto a \textit{virtual camera}. Choosing an appropriate virtual camera model is essential, as LiDAR sensors cover an extremely wide field of view (FOV).

In line with Cosmos-Transfer, we use an $f$-$\theta$ virtual camera instead of a standard pinhole model. Wide-angle perception systems, typical in autonomous driving, require ray–angle–based projection models that accurately handle $180^\circ+$ FOV and avoid the extreme nonlinear distortions produced by perspective projection at large incidence angles \cite{scaramuzza2006flexible, courbon2007generic}. The $f$-$\theta$ formulation also aligns well with the optics of wide-FOV imaging systems commonly used in robotics and AV platforms.

\Cref{fig:lidar_virtual_cam} shows the full conversion pipeline: attaching the virtual camera, rendering a depth map from the LiDAR scan, and passing the resulting depth image to the World Foundation Model to generate the corresponding RGB frame. For $360^\circ$ LiDAR, our approach can be naturally extended by deploying multiple virtual cameras with overlapping FOVs, leveraging the WFM’s multi-view consistency to produce a coherent full-view RGB generation.

\subsection{Fusion Method - Point Matching Performance}\label{sec:fuse_ablation_supp}
Accurate registration relies heavily on producing reliable point-to-point correspondences. To evaluate the effect of our probabilistic fusion operators on this stage, we compare Noisy-AND and Noisy-OR on the point-matching task. \cref{fig:and_vs_or} shows the resulting precision–recall curves evluated on the 3DMatch dataset. A correspondence is counted as correct if, under the ground-truth transformation, the matched points lie within $5cm$ of each other. Across the entire recall range, Noisy-AND consistently attains higher precision than Noisy-OR. This behavior is expected: Noisy-AND emphasizes matches that are simultaneously confident in both modalities, whereas Noisy-OR tends to admit a larger set of correspondences, including lower-quality ones, which reduces precision.

\section{Additional Implementation Details}\label{sec:implementation_details_supp}
For RGB generation, we employ \textit{Cosmos-Transfer1-7B (Depth)} for indoor datasets and Cosmos-\textit{Transfer1-7B-Sample-AV (LiDAR)} for outdoor datasets. Input depth maps are resized to $960\!\times\!704$ for indoor data and $1280\!\times\! 640$ for outdoor data to match the expected Cosmos input resolutions. Cosmos is run with the following parameters: CFG=7, $\sigma_{\max}\!\!=\!80$, a spatiotemporal control weight of 1, 35 denoising steps, and a target frame rate of 30\,fps. The output resolution matches the input depth resolution, and all inference is performed on an NVIDIA RTX A6000 GPU.

For the VFM pathway, we use MASt3R with an Encoder ViT-L and Decoder ViT-B configuration. Input RGB images are resized to $512\!\times\!384$ prior to feature extraction, and we use only the descriptor head ($d_{\text{img}}=24$).

For the geometric branch, indoor point clouds are voxelized at $2.5$\,cm and outdoor point clouds at $5$\,cm. We extract geometric features using GeoTransformer, employing the official 3DMatch checkpoint for indoor scenes and the KITTI checkpoint for outdoor scenes, producing the geometric descriptors ($d_{\text{geo}}=256$).

All components, Cosmos, MASt3R, and GeoTransformer, are used with their publicly released pretrained weights and remain completely frozen in our pipeline.

\section{Runtime Analysis}\label{sec:runtime_supp}
\begin{table}[t]
\centering
\footnotesize
\resizebox{\columnwidth}{!}{
\begin{tabular}{lccccc}
\toprule
\textbf{Method} & \textbf{WFM (s)} & \textbf{VFM (s)} & \textbf{Geom. (s)} & \textbf{Pose (s)} & \textbf{Total (s)} \\
\midrule
GeoTransformer & - & - & 0.075 & 1.558 & 1.633 \\
Ours           & 507.0 & 0.973 & 0.075 & 0.066 & 508.1 \\
\bottomrule
\end{tabular}
}
\caption{\textbf{Runtime Analysis.} Runtime per registration problem measured on a single NVIDIA RTX A6000 GPU.}
\label{tab:runtime}
\end{table}

We report the runtime breakdown of C-GenReg and compare it with GeoTransformer in \cref{tab:runtime}. Since GPCR code is not publicly available, a direct runtime comparison with that method cannot be provided.
The runtime of C-GenReg is dominated by the World Foundation Model (WFM), which generates multi-view RGB videos from the input depth sequences. As shown in \cref{tab:runtime}, this stage accounts for almost the entire runtime ($507$s), while the remaining components are lightweight: the Vision Foundation Model (VFM) used for correspondence extraction requires less than one second, and the geometric matching and pose estimation stages take only a fraction of a second.

This cost reflects the use of a powerful pretrained generative prior in a fully training-free pipeline. Importantly, recent work on distilling Cosmos Transfer models \cite{cosmos_cookbook_distilling_transfer1_2025} reports up to a $72\times$ inference speedup, which would reduce the runtime of our pipeline to approximately $\sim7$s. Additional speedups may also be achieved by lowering the video generation rate.

\begin{table}[t]
\centering
\setlength{\tabcolsep}{1.8pt} % compact horizontal spacing
\renewcommand{\arraystretch}{0.98} % compact vertical spacing
\resizebox{\columnwidth}{!}{%
\begin{tabular}{@{}l
                c c c c c
                c c c c c@{}}
\Xhline{1.3pt}
\multicolumn{1}{c}{} &
\multicolumn{5}{c}{\textbf{Rotation} [deg]} &
\multicolumn{5}{c}{\textbf{Translation} [cm]} \\
\cmidrule(lr){2-6} \cmidrule(l){7-11}
\textbf{Method} &
\multicolumn{3}{c}{\textbf{Accuracy} $\uparrow$} &
\multicolumn{2}{c}{\textbf{Error} $\downarrow$} &
\multicolumn{3}{c}{\textbf{Accuracy} $\uparrow$} &
\multicolumn{2}{c}{\textbf{Error} $\downarrow$} \\
\cmidrule(lr){2-4} \cmidrule(lr){5-6} \cmidrule(lr){7-9} \cmidrule(l){10-11}
& $5$ & $10$ & $45$ & Mean & Med. &
$5$ & $10$ & $25$ & Mean & Med. \\
\Xhline{1.3pt}
FCGF
  & 70.2 & 87.7 & 96.2 & 9.5 & 3.3
  & 27.5 & 58.3 & 82.9 & 23.6 & 8.3 \\
GeoTransformer
  & 94.0 & 96.8 & 98.1 & 4.3 & 1.0
  & 79.2 & 92.0 & 96.7 & 8.2 & 2.5 \\
\textbf{C-GenReg (Ours)}
  & \textbf{99.4} & \textbf{99.8} & \textbf{99.9} & \textbf{1.1} & \textbf{0.9}
  & \textbf{87.5} & \textbf{97.1} & \textbf{99.3} & \textbf{3.0} & \textbf{1.9} \\
\Xhline{1.3pt}
\end{tabular}%
}
\caption{\textbf{ScanNet Original Benchmark.} Rotation and translation accuracy (\% of pairs within RRE/RTE thresholds in $deg$ and $cm$ respectively) and mean/median error on the ScanNet original Split benchmark. Best results are in \textbf{bold}.}
\label{tab:scannet_org}
\vspace{-15pt}
\end{table}

\section{Additional Results}\label{sec:addtional_res_supp}
\begin{figure}
    \centering
    \includegraphics[width=1\linewidth]{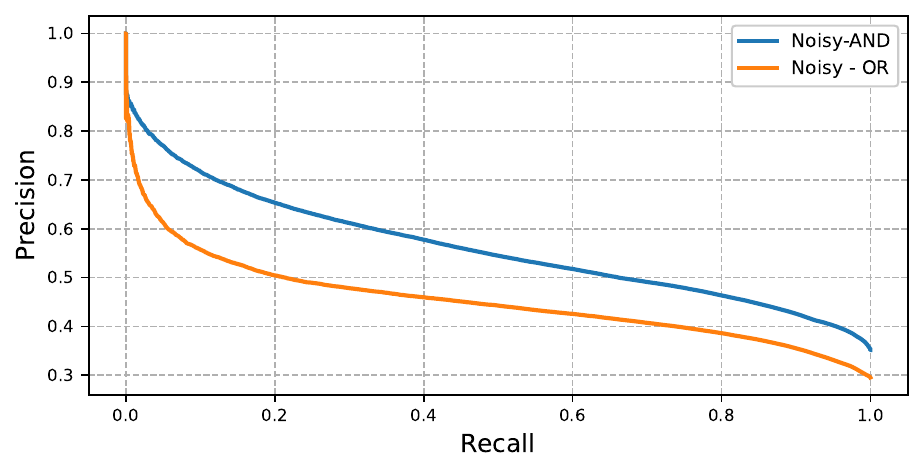}
    \caption{\textbf{Matching Performance Comparison of Noisy-AND vs. Noisy-OR.} Precision–recall curves comparing the two probabilistic fusion operators on the point-matching task (a match is correct if within $5 cm$ under the ground-truth transformation). Noisy-AND consistently achieves higher precision at similar recall rates.}
    \label{fig:and_vs_or}
\end{figure}

\subsection{ScanNet Original Benchmark}\label{sec:scannet_org_supp}
In \cref{tab:scannet_org}, we report registration performance on the original ScanNet benchmark, which consists of relatively easy pairs sampled 20 frames apart, resulting in modest motion between the source and target scans. C-GenReg achieves clear improvements across all metrics, with particularly strong gains in translation accuracy compared to FCGF and GeoTransformer, thus demonstrating that augmenting geometric features with our RGB-generated branch provides a consistent performance boost and serves as an effective enhancement to existing registration pipelines.

\subsection{Low-Overlap Benchmarks}
\label{sec:low_overlap}
\begin{table}[t]
\centering
\footnotesize
\setlength{\tabcolsep}{3.5pt}
\begin{tabular}{lcccc}
\toprule
 & \multicolumn{2}{c}{\textbf{Lo3DMatch}} & \multicolumn{2}{c}{\textbf{LoWaymo}} \\
\cmidrule(lr){2-3} \cmidrule(lr){4-5}
\textbf{Method} & \textbf{RRE} & \textbf{RTE} & \textbf{RRE} & \textbf{RTE} \\
\midrule
GeoTransformer & 21.10 & 53.46 & 19.72 & 9.04 \\
Ours           & \textbf{14.57} & \textbf{45.49} & \textbf{4.95} & \textbf{1.66} \\
\bottomrule
\end{tabular}
\caption{\textbf{Low-Overlap Results.} Mean RRE (degrees) and mean RTE (cm for Lo3DMatch and m for LoWaymo).}
\label{tab:low_overlap}
\vspace{-8pt}
\end{table}
To further evaluate the robustness of C-GenReg in challenging scenarios, we conduct experiments on low-overlap registration benchmarks. Specifically, we evaluate on the Lo3DMatch benchmark and on a low-overlap split of the Waymo dataset, where the overlap between point clouds is limited to be less than $30\%$ (\cref{tab:low_overlap}). 
As expected, performance degrades compared to high-overlap cases due to the reduced geometric overlap between scans. Nevertheless, C-GenReg consistently outperforms the geometry-only baseline GeoTransformer across both datasets. On Lo3DMatch, C-GenReg reduces the rotation error from $21.10^\circ$ to $14.57^\circ$ and the translation error from $53.46$\,cm to $45.49$\,cm. A more substantial improvement is observed on the low-overlap Waymo benchmark, where C-GenReg reduces the rotation error from $19.72^\circ$ to $4.95^\circ$ and the translation error from $9.04$\,m to $1.66$\,m.

These results highlight the benefit of incorporating generative priors into the registration pipeline. Even when the geometric overlap between scans is limited, the WFM-based image generation remains consistent in the shared regions of the scene. Combined with the probabilistic match-then-fuse formulation, this enables C-GenReg to recover reliable correspondences despite the sparsity of overlapping geometry, leading to improved registration accuracy.

\subsection{Qualitative Examples}\label{sec:qualitative_supp}
We present additional qualitative results of C-GenReg across both indoor and outdoor benchmarks.
\cref{fig:qual_matches_3dmatch} and \cref{fig:qual_matches_waymo} illustrate registration outcomes on the 3DMatch and Waymo datasets, respectively. Each example shows the generated RGB views with a subset of color-coded correspondence matches, together with the same matches visualized directly on the 3D point clouds.

\cref{fig:gen_rgb_3dmatch}, \cref{fig:gen_rgb_scannet} and \cref{fig:gen_rgb_waymo} highlight the generative capabilities of the employed World Foundation Model across the evaluated datasets, and especially the geometric coherence and multi-view consistency of the views generated by the C-GenReg pipeline, as these are the essential components towards the success of the C-GenReg registration. For each scene, we visualize the input depth maps and the generated RGB outputs, demonstrating strong multi-view consistency between source and target views as well as geometric coherence between the underlying depth structure and the synthesized images.

\begin{figure*}
    \centering
    \includegraphics[
        page=1,
        width=\linewidth,
        trim={0mm 0mm 90mm 0mm},
        clip
    ]{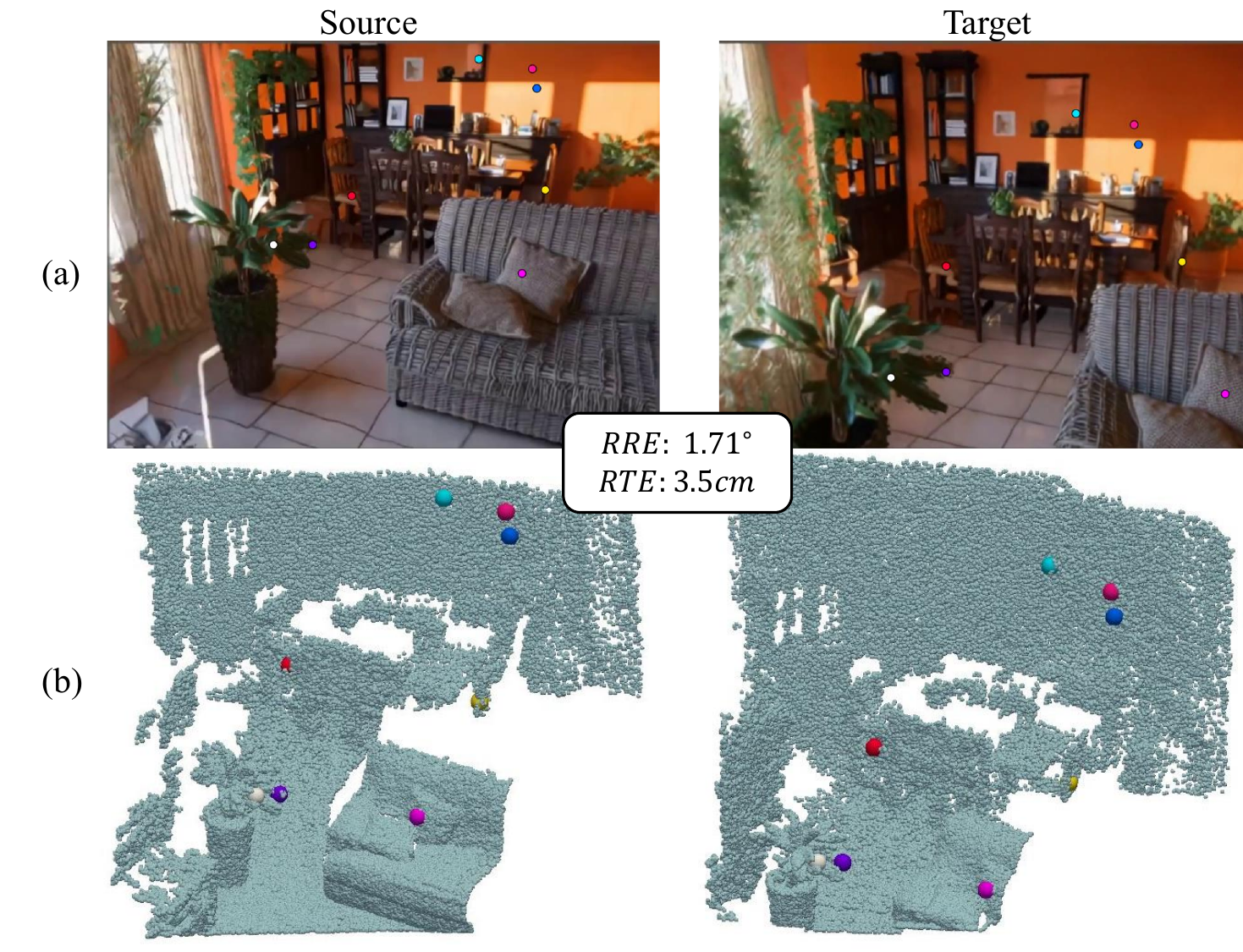}

    \caption{
        \textbf{Qualitative registration example from C-GenReg on the 3DMatch dataset.}
        Generated source and target images with a subset of matched keypoints 
        (same color indicates correspondence), and the same correspondences visualized 
        on the source and target 3D point clouds. The resulting rotation error 
        (RRE, °) and translation error (RTE, cm) are reported as well.
    }
    \label{fig:qual_matches_3dmatch}
    \vspace{-10pt}
\end{figure*}

\begin{figure*}[t]
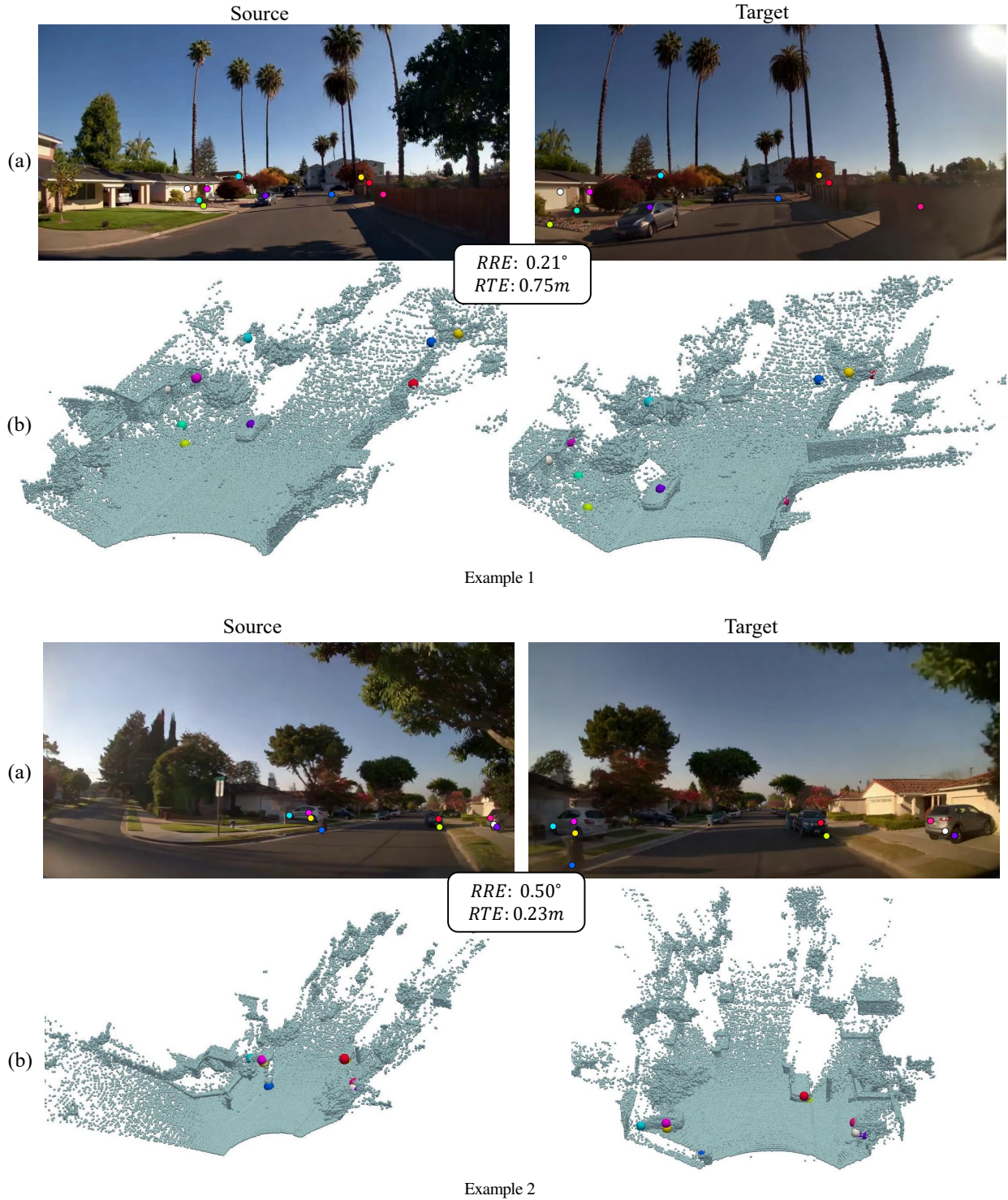

    \centering

    % -------- Example 1 --------
    \begin{subfigure}{0.95\linewidth}
        \centering
        \includegraphics[
            page=3,
            width=\linewidth,
            trim={0mm 0mm 0mm 0mm},
            clip
        ]{Figures/qualitative_examples.pdf}
        \caption*{Example 1}
    \end{subfigure}

    \vspace{6pt}

    % -------- Example 2 --------
    \begin{subfigure}{0.95\linewidth}
        \centering
        \includegraphics[
            page=4,
            width=\linewidth,
            trim={0mm 0mm 0mm 0mm},
            clip
        ]{Figures/qualitative_examples.pdf}
        \caption*{Example 2}
    \end{subfigure}

    \caption{
        \textbf{Qualitative registration examples of C-GenReg on the Waymo dataset.}
       Row (a) shows generated source and target images with a subset of matched keypoints 
        (same color indicates correspondence). Row (b) shows the same correspondences visualized 
        on the source and target 3D point clouds. The resulting rotation 
        error (RRE, °) and translation error (RTE, m) are also reported.
    }
    \label{fig:qual_matches_waymo}
    \vspace{-12pt}
\end{figure*}

\begin{figure*}[!p]
    \centering

    % =========================================================
    % Figure 11
    % =========================================================
    \setcounter{subfigure}{0}
    \begin{minipage}{\textwidth}
        \centering

        \begin{subfigure}{0.316\textwidth}
            \centering
            \includegraphics[
                page=5,
                width=\linewidth,
                trim={0mm 0mm 90mm 0mm},
                clip
            ]{Figures/qualitative_examples.pdf}
            \caption{Example 1}
        \end{subfigure}
        \hfill
        \begin{subfigure}{0.316\textwidth}
            \centering
            \includegraphics[
                page=6,
                width=\linewidth,
                trim={0mm 0mm 90mm 0mm},
                clip
            ]{Figures/qualitative_examples.pdf}
            \caption{Example 2}
        \end{subfigure}
        \hfill
        \begin{subfigure}{0.316\textwidth}
            \centering
            \includegraphics[
                page=7,
                width=\linewidth,
                trim={0mm 0mm 90mm 0mm},
                clip
            ]{Figures/qualitative_examples.pdf}
            \caption{Example 3}
        \end{subfigure}

        \setcounter{figure}{10}
        \captionof{figure}{
            \textbf{Multi-view consistent RGB generation from depth on 3DMatch.}
            Three representative synthetic RGB examples generated from depth.
            The paired views remain geometrically and visually consistent.
        }
        \label{fig:gen_rgb_3dmatch}
    \end{minipage}

    \vspace{4pt}

    % =========================================================
    % Figure 12
    % =========================================================
    \setcounter{subfigure}{0}
    \begin{minipage}{\textwidth}
        \centering

        \begin{subfigure}{0.316\textwidth}
            \centering
            \includegraphics[
                page=12,
                width=\linewidth,
                trim={0mm 0mm 85mm 0mm},
                clip
            ]{Figures/qualitative_examples.pdf}
            \caption{Example 1}
        \end{subfigure}
        \hfill
        \begin{subfigure}{0.316\textwidth}
            \centering
            \includegraphics[
                page=13,
                width=\linewidth,
                trim={0mm 0mm 85mm 0mm},
                clip
            ]{Figures/qualitative_examples.pdf}
            \caption{Example 2}
        \end{subfigure}
        \hfill
        \begin{subfigure}{0.316\textwidth}
            \centering
            \includegraphics[
                page=14,
                width=\linewidth,
                trim={0mm 0mm 85mm 0mm},
                clip
            ]{Figures/qualitative_examples.pdf}
            \caption{Example 3}
        \end{subfigure}

        \captionof{figure}{
            \textbf{Multi-view consistent RGB generation from depth on ScanNet.}
            Three representative synthetic RGB examples from indoor depth scans.
            The synthesized frames preserve layout and structure across viewpoints.
        }
        \label{fig:gen_rgb_scannet}
    \end{minipage}

    \vspace{4pt}

    % =========================================================
    % Figure 13
    % =========================================================
    \setcounter{subfigure}{0}
    \begin{minipage}{\textwidth}
        \centering

        \begin{subfigure}{0.445\textwidth}
            \centering
            \includegraphics[
                page=8,
                width=\linewidth,
                trim={10mm 5mm 10mm 5mm},
                clip
            ]{Figures/qualitative_examples.pdf}
            \caption{Example 1}
        \end{subfigure}
        \hfill
        \begin{subfigure}{0.445\textwidth}
            \centering
            \includegraphics[
                page=9,
                width=\linewidth,
                trim={0mm 20mm 20mm 0mm},
                clip
            ]{Figures/qualitative_examples.pdf}
            \caption{Example 2}
        \end{subfigure}

        \vspace{2pt}

        \begin{subfigure}{0.445\textwidth}
            \centering
            \includegraphics[
                page=10,
                width=\linewidth,
                trim={0mm 20mm 20mm 0mm},
                clip
            ]{Figures/qualitative_examples.pdf}
            \caption{Example 3}
        \end{subfigure}
        \hfill
        \begin{subfigure}{0.445\textwidth}
            \centering
            \includegraphics[
                page=11,
                width=\linewidth,
                trim={0mm 20mm 20mm 0mm},
                clip
            ]{Figures/qualitative_examples.pdf}
            \caption{Example 4}
        \end{subfigure}

        \captionof{figure}{
            \textbf{Multi-view consistent RGB generation from depth on Waymo.}
            Four representative synthetic RGB examples generated from LiDAR-projected depth.
            The synthesized frames preserve scene geometry across viewpoints.
        }
        \label{fig:gen_rgb_waymo}
    \end{minipage}
\end{figure*}

\end{document}